%% LyX 2.1.0beta1 created this file.  For more info, see http://www.lyx.org/.
%% Do not edit unless you really know what you are doing.
\documentclass{article}
\usepackage[latin9]{inputenc}
\usepackage{color}
\usepackage{float}
\usepackage{wrapfig}
\usepackage{amsthm}
\usepackage{amsmath}
\usepackage{amssymb}
\usepackage{graphicx}
\usepackage{esint}
\usepackage[numbers]{natbib}

\makeatletter

%%%%%%%%%%%%%%%%%%%%%%%%%%%%%% LyX specific LaTeX commands.
%% Because html converters don't know tabularnewline
\providecommand{\tabularnewline}{\\}
%% A simple dot to overcome graphicx limitations
\newcommand{\lyxdot}{.}

\floatstyle{ruled}
\newfloat{algorithm}{tbp}{loa}
\providecommand{\algorithmname}{Algorithm}
\floatname{algorithm}{\protect\algorithmname}

%%%%%%%%%%%%%%%%%%%%%%%%%%%%%% Textclass specific LaTeX commands.
\theoremstyle{plain}
\newtheorem{thm}{\protect\theoremname}
\theoremstyle{definition}
\newtheorem{defn}[thm]{\protect\definitionname}
\theoremstyle{plain}
\newtheorem{lem}[thm]{\protect\lemmaname}
\ifx\proof\undefined
\newenvironment{proof}[1][\protect\proofname]{\par
\normalfont\topsep6\p@\@plus6\p@\relax
\trivlist
\itemindent\parindent
\item[\hskip\labelsep\scshape #1]\ignorespaces
}{%
\endtrivlist\@endpefalse
}
\providecommand{\proofname}{Proof}
\fi
\theoremstyle{plain}
\newtheorem{cor}[thm]{\protect\corollaryname}

%%%%%%%%%%%%%%%%%%%%%%%%%%%%%% User specified LaTeX commands.
\usepackage{subfigure}

% For algorithms
\usepackage{algorithm}\usepackage{algorithmic}

\usepackage{nips14submit_e,times}
\usepackage{url}
\nipsfinalcopy

% just math
\usepackage{amsthm}% allow blackboard bold (aka N,R,Q sets)
\usepackage{psfrag}
\usepackage{afterpage}

\author{
Xianghang Liu\\
NICTA, The University of New South Wales\\
\texttt{xianghang.liu@nicta.com.au}
\And
Justin Domke\\
NICTA, The Australian National University\\
\texttt{justin.domke@nicta.com.au}
}

\title{Projecting Markov Random Field Parameters for Fast Mixing}

\makeatother

\providecommand{\corollaryname}{Corollary}
\providecommand{\definitionname}{Definition}
\providecommand{\lemmaname}{Lemma}
\providecommand{\theoremname}{Theorem}

\begin{document}
\maketitle
\begin{abstract}
Markov chain Monte Carlo (MCMC) algorithms are simple and extremely
powerful techniques to sample from almost arbitrary distributions.
The flaw in practice is that it can take a large and/or unknown amount
of time to converge to the stationary distribution. This paper gives
sufficient conditions to guarantee that univariate Gibbs sampling
on Markov Random Fields (MRFs) will be fast mixing, in a precise sense.
Further, an algorithm is given to project onto this set of fast-mixing
parameters in the Euclidean norm. Following recent work, we give an
example use of this to project in various divergence measures, comparing
univariate marginals obtained by sampling after projection to common
variational methods and Gibbs sampling on the original parameters.
\end{abstract}
\label{submission}

\section{Introduction}

Exact inference in Markov Random Fields (MRFs) is generally intractable,
motivating approximate algorithms. There are two main classes of approximate
inference algorithms: variational methods and Markov chain Monte Carlo
(MCMC) algorithms \citep{Wainwright2008GraphicalModelsExponential}.

Among variational methods, mean-field approximations \citep{Koller2009ProbabilisticGraphicalModels:}
are based on a ``tractable'' family of distributions, such as the
fully-factorized distributions. Inference finds a distribution in
the tractable set to minimize the KL-divergence from the true distribution.
Other methods, such as loopy belief propagation (LBP), generalized
belief propagation \citep{Yedidia2005ConstructingFreeEnergy} and
expectation propagation \citep{Minka2001ExpectationPropagationApproximate}
use a less restricted family of target distributions, but approximate
the KL-divergence. Variational methods are typically fast, and often
produce high-quality approximations. However, when the variational
approximations are poor, estimates can be correspondingly worse.

MCMC strategies, such as Gibbs sampling, simulate a Markov chain whose
stationary distribution is the target distribution. Inference queries
are then answered by the samples drawn from the Markov chain. In principle,
MCMC will be arbitrarily accurate if run long enough. The principal
difficulty is that the time for the Markov chain to converge to its
stationary distribution, or the ``mixing time'', can be exponential
in the number of variables.

This paper is inspired by a recent hybrid approach for Ising models
\citep{Domke2013ProjectingIsingModel}. This approach minimizes the
divergence from the true distribution to one in a tractable family.
However, the tractable family is a ``fast mixing'' family where
Gibbs sampling is guaranteed to quickly converge to the stationary
distribution. They observe that an Ising model will be fast mixing
if the spectral norm of a matrix containing the absolute values of
all interactions strengths is controlled. An algorithm projects onto
this fast mixing parameter set in the Euclidean norm, and projected
gradient descent (PGD) can minimize various divergence measures. This
often leads to inference results that are better than either simple
variational methods or univariate Gibbs sampling (with a limited time
budget). However, this approach is limited to Ising models, and scales
poorly in the size of the model, due to the difficulty of projecting
onto the spectral norm.

The principal contributions of this paper are, first, a set of sufficient
conditions to guarantee that univariate Gibbs sampling on an MRF will
be fast-mixing (Section \ref{sec:Dependency-for-MRF}), and an algorithm
to project onto this set in the Euclidean norm (Section \ref{sec:Euclidean-Projection}).
A secondary contribution of this paper is considering an alternative
matrix norm (the induced $\infty$-norm) that is somewhat looser than
the spectral norm, but more computationally efficient. Following previous
work \citep{Domke2013ProjectingIsingModel}, these ideas are experimentally
validated via a projected gradient descent algorithm to minimize other
divergences, and looking at the accuracy of the resulting marginals.
The ability to project onto a fast-mixing parameter set may also be
of independent interest. For example, it might be used during maximum
likelihood learning to ensure that the gradients estimated through
sampling are more accurate.

\section{Notation}

We consider discrete pairwise MRFs with $n$ variables, where the
i-th variable takes values in $\{1,...,L_{i}\}$,$ $ $\mathcal{E}$
is the set of edges, and $\theta$ are the potentials on each edge.
Each edge in $\mathcal{E}$ is an ordered pair $(i,j)$ with $i\leq j$.
The parameters are a set of matrices $\theta:=\{\theta^{ij}|\theta^{ij}\in\mathcal{R}^{L_{i}\times L_{j}},\forall(i,j)\in\mathcal{E}\}$.
When $i>j$, and $(j,i)\in\mathcal{E}$, we let $\theta^{ij}$ denote
the transpose of $\theta^{ji}$. The corresponding distribution is
\begin{equation}
p(x;\theta)=\exp\left(\sum_{(i,j)\in\mathcal{E}}\theta^{ij}(x_{i},x_{j})-A(\theta)\right),\label{eq_mrf1}
\end{equation}
where $A(\theta):=\log\sum_{x}\exp\left(\sum_{(i,j)\in\mathcal{E}}\theta^{ij}(x_{i},x_{j})\right)$
is the log-partition function, and $\theta^{ij}(x_{i},x_{j})$ denotes
the entry in the $x_{i}$-th row and $x_{j}$-th column of $\theta^{ij}$.
It is easy to show that any parametrization of a pairwise MRF can
be converted into this form. ``Self-edges'' $(i,i)$ can be included
in $\mathcal{E}$ if one wishes to explicitly represent univariate
terms.

It is sometimes convenient to work with the exponential family representation
\begin{equation}
p(x;\theta)=\exp\{f(x)\cdot\theta-A(\theta)\},
\end{equation}
where $f(x)$ is the sufficient statistics for configuration $x$.
If these are indicator functions for all configurations of all pairs
in $\mathcal{E}$, then the two representations are equivalent.

\section{Background Theory on Rapid Mixing\label{sec:Background-Theory}}

\label{sec_cond}This section reviews background on mixing times that
will be used later in the paper.
\begin{defn}
Given two finite distributions $p$ and $q$, the\textbf{ total variation
distance} $\Vert\cdot\Vert_{TV}$ is defined as $\Vert p(X)-q(X)\Vert_{TV}=\frac{1}{2}\sum_{x}\vert p(X=x)-q(X=x)\vert$.
\end{defn}
Next, one must define a measure of how fast a Markov chain converges
to the stationary distribution. Let the state of the Markov chain
after $t$ iterations be $X^{t}$. Given a constant $\epsilon$, this
is done by finding some number of iterations $\tau(\epsilon)$ such
that the induced distribution $p(X^{t}|X^{0}=x)$ will always have
a distance of less than $\epsilon$ from the stationary distribution,
irrespective of the starting state $x$.
\begin{defn}
Let $\{X^{t}\}$ be the sequence of random variables corresponding
to running Gibbs sampling on a distribution $p$. The \textbf{mixing
time} $\tau(\epsilon)$ is defined as $\tau(\epsilon)=\min\{t:d(t)<\epsilon\}$,
where $d(t)=\max_{x}\Vert\mathbb{P}(X^{t}|X^{0}=x)-p(X)\Vert_{TV}$
is the maximum distance at time $t$ when considering all possible
starting states $x$. 
\end{defn}
Now, we are interested in when Gibbs sampling on a distribution $p$
can be shown to have a fast mixing time. The central property we use
is the dependency of one variable on another, defined informally as
how much the conditional distribution over $X_{i}$ can be changed
when all variables other than $X_{j}$ are the same. 
\begin{defn}
\label{R-def}Given a distribution $p$, the dependency matrix $R$
is defined by 
\[
R_{ij}=\max_{x,x':x_{-j}=x'_{-j}}\Vert p(X_{i}|x_{-i})-p(X_{i}|x'_{-i})\Vert_{TV}.
\]

\end{defn}
Here, the constraint $x_{-j}=x'_{-j}$ indicates that all variables
in $x$ and $x'$ are identical except $x_{j}$. The central result
on rapid mixing is given by the following Theorem, due to Dyer et
al. \citep{Dyer2009Matrixnormsand}, generalizing the work of Hayes
\citep{Hayes2006simpleconditionimplying}. Informally, it states that
if $\Vert R\Vert<1$ for \textit{any} sub-multiplicative norm $\Vert\cdot\Vert$,
then mixing will take on the order of $n\ln n$ iterations, where
$n$ is the number of variables.
\begin{thm}
\citep[Lemma 17]{Dyer2009Matrixnormsand} If $\Vert\cdot\Vert$ is
any sub-multiplicative matrix norm and $||R||<1$, the mixing time
of univariate Gibbs sampling on a system with $n$ variables with
random updates is bounded by $\tau(\epsilon)\leq\frac{n}{1-\Vert R\Vert}\ln\left(\frac{\Vert1_{n}\Vert\,\Vert1_{n}^{T}\Vert}{\epsilon}\right).$
\end{thm}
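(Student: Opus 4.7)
The plan is to bound the total variation distance via a coupling argument and then convert it into a mixing time bound. I first set up a coupling $(X^t, Y^t)$ of two Gibbs chains in which $X^0 = x$ is arbitrary while $Y^0 \sim p$, so that by stationarity $Y^t \sim p$ for all $t$. At each step, the \emph{same} coordinate $k \in \{1,\ldots,n\}$ is drawn uniformly at random, and the two chains are updated using the optimal (maximal) coupling of the conditionals $p(X_k \mid X^t_{-k})$ and $p(X_k \mid Y^t_{-k})$. Writing $\rho^t_j := \mathbb{P}(X^t_j \neq Y^t_j)$ for the vector of coordinatewise disagreement probabilities, the coupling inequality and a union bound yield $d(t) \leq \mathbb{P}(X^t \neq Y^t) \leq 1_n^T \rho^t$.

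The heart of the proof is the componentwise contraction $\rho^{t+1} \leq M \rho^t$ with $M := (1-1/n)I + (1/n) R$. Two ingredients go into this. First, by a path-coupling (telescoping) argument built on Definition \ref{R-def}, for any pair of states $x, x'$,
\[
\Vert p(X_i \mid x_{-i}) - p(X_i \mid x'_{-i}) \Vert_{TV} \leq \sum_j R_{ij}\,\mathbf{1}[x_j \neq x'_j],
\]
obtained by interpolating from $x_{-i}$ to $x'_{-i}$ one coordinate at a time and applying the triangle inequality for total variation. Second, the optimal coupling ensures that, conditional on coordinate $k$ being selected for update, the expected new disagreement at $k$ is bounded by the preceding TV distance, while all other coordinates are unchanged. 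Averaging over the uniform random choice of $k$ and taking the unconditional expectation yields the claimed recursion. Iterating, and using $\rho^0 \leq 1_n$ componentwise together with $M \geq 0$, gives $\rho^t \leq M^t 1_n$.

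For the quantitative bound, I expand $M^t$ by the binomial theorem (since $I$ and $R$ commute) and apply sub-multiplicativity term-by-term, viewing $1_n$ as an $n \times 1$ matrix:
\[
1_n^T R^k 1_n \leq \Vert 1_n^T \Vert\, \Vert R \Vert^k\, \Vert 1_n \Vert.
\]
Summing the resulting binomial series collapses the estimate to
\[
d(t) \leq 1_n^T M^t 1_n \leq \Vert 1_n^T\Vert\,\Vert 1_n\Vert\,\bigl(1 - (1-\Vert R\Vert)/n\bigr)^t \leq \Vert 1_n^T\Vert\,\Vert 1_n\Vert\,\exp\!\bigl(-t(1-\Vert R\Vert)/n\bigr),
\]
using $1-x \leq e^{-x}$. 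Solving for the smallest $t$ that makes the right-hand side less than $\epsilon$ produces the stated bound on $\tau(\epsilon)$.

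The main obstacle is the path-coupling step: lifting the single-coordinate definition of $R$ to a global TV bound between conditionals is what lets us aggregate disagreements across many coordinates into a scalar contraction. A secondary subtlety is the final norm manipulation; rather than bounding $\Vert M\Vert$ directly via the triangle inequality (which would implicitly require $\Vert I\Vert = 1$), expanding $M^t$ binomially and applying sub-multiplicativity term-by-term keeps the argument valid for \emph{any} sub-multiplicative matrix norm.
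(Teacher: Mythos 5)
The paper does not prove this theorem; it is quoted directly from Dyer, Goldberg and Jerrum \citep[Lemma 17]{Dyer2009Matrixnormsand}, so there is no in-paper proof to compare against. Your reconstruction --- maximal coupling of the single-site updates, the path-coupling/telescoping bound $\Vert p(X_i\mid x_{-i})-p(X_i\mid x'_{-i})\Vert_{TV}\leq\sum_j R_{ij}\mathbf{1}[x_j\neq x'_j]$, the componentwise recursion $\rho^{t+1}\leq\bigl((1-1/n)I+(1/n)R\bigr)\rho^t$, and the binomial expansion to avoid assuming $\Vert I\Vert=1$ --- is correct and is essentially the argument of that reference. The only step left implicit is that the scalar $1_n^T R^k 1_n$ is dominated by the norm of the corresponding $1\times1$ matrix (which holds because sub-multiplicativity forces $\Vert[1]\Vert\geq1$); with that noted, the proof is complete.
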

Here, $\Vert1_{n}\Vert$ denotes the same matrix norm applied to a
matrix of ones of size $n\times1$, and similarly for $1_{n}^{T}$.
In particular, if $\Vert\cdot\Vert$ induced by a vector p-norm, then
$\Vert1_{n}\Vert\,\Vert1_{n}^{T}\Vert=n$.

Since this result is true for a variety of norms, it is natural to
ask, for a given matrix $R$, which norm will give the strongest result.
It can be shown that for symmetric matrices (such as the dependency
matrix), the spectral norm $\Vert\cdot\Vert_{2}$ is always superior.
\begin{thm}
\citep[Lemma 13]{Dyer2009Matrixnormsand} If $A$ is a symmetric matrix
and $\Vert\cdot\Vert$ is any sub-multiplicative norm, then $\Vert A\Vert_{2}\leq\Vert A\Vert$. 
\end{thm}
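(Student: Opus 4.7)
The plan is to reduce the claim to the standard fact that the spectral radius is bounded above by every sub-multiplicative matrix norm, and then exploit symmetry to upgrade spectral radius to spectral norm.

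First, I would recall that for a real symmetric matrix $A$, the spectral norm $\|A\|_2$ (the largest singular value) coincides with the spectral radius $\rho(A)=\max_i|\lambda_i(A)|$. This is because symmetric matrices are orthogonally diagonalizable with real eigenvalues, so the singular values are simply the absolute values of the eigenvalues. Given this identification, it suffices to prove that $\rho(A)\le\|A\|$ for every sub-multiplicative matrix norm $\|\cdot\|$ and every square matrix $A$ (symmetry is not needed at this step).

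For the spectral radius bound, I would pick an eigenvalue $\lambda$ with $|\lambda|=\rho(A)$ and a nonzero (real, by symmetry of $A$) eigenvector $v$ with $Av=\lambda v$. The subtlety is that a sub-multiplicative matrix norm is defined on matrices, not on vectors, so I cannot directly write $\|Av\|\le\|A\|\,\|v\|$. The standard workaround is to form the square matrix $V$ whose columns are all equal to $v$ (or any matrix with $v$ as one column and zeros elsewhere padded to be square). Then $AV=\lambda V$, so by sub-multiplicativity
\[
|\lambda|\,\|V\| \;=\; \|\lambda V\| \;=\; \|AV\| \;\le\; \|A\|\,\|V\|.
\]
Since $V\ne 0$, positive-definiteness of the norm gives $\|V\|>0$, and we divide to conclude $|\lambda|\le\|A\|$. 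Combining, $\|A\|_2=\rho(A)=|\lambda|\le\|A\|$.

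The only place I anticipate a small obstacle is the eigenvector-to-matrix trick: one has to remember that sub-multiplicativity is a statement about matrix products, so the eigenvector has to be embedded into a nonzero square matrix before the norm inequality can be invoked. Everything else is routine once the identification $\|A\|_2=\rho(A)$ for symmetric $A$ is in hand.
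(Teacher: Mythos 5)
Your proof is correct. The paper itself offers no proof of this statement---it is quoted verbatim from Dyer et al.\ as their Lemma~13---so there is no in-paper argument to compare against; your route (identify $\Vert A\Vert_{2}$ with the spectral radius $\rho(A)$ using the orthogonal diagonalizability of symmetric matrices, then bound $\rho(A)\leq\Vert A\Vert$ by embedding an eigenvector into a nonzero square matrix and invoking sub-multiplicativity) is the standard textbook argument and is complete, including the two points where care is needed: the eigenvector can be taken real because $A$ is symmetric, and the eigenvector must be padded to a square matrix before the norm inequality applies. As a minor remark, for symmetric $A$ one can sidestep the eigenvector-embedding trick entirely by writing $\Vert A\Vert_{2}^{2}=\rho(A^{2})\leq\Vert A^{2}\Vert\leq\Vert A\Vert^{2}$ (or iterating with powers $A^{2^{k}}$ and Gelfand's formula), but this buys nothing essential over your version.
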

Unfortunately, as will be discussed below, the spectral norm can be
more computationally expensive than other norms. As such, we will
also consider the use of the $\infty$-norm $\Vert\cdot\Vert_{\infty}$.
This leads to additional looseness in the bound in general, but is
limited in some cases. In particular if $R=rG$ where $G$ is the
adjacency matrix for some regular graph with degree $d$, then for
all induced p-norms, $\Vert R\Vert=rd$, since $\Vert R\Vert=\max_{x\not=0}\Vert Rx\Vert/\Vert x\vert=r\max_{x\not=0}\Vert Gx\Vert/\Vert x\Vert=r\Vert Go\Vert/\Vert o\Vert=rd,$
where $o$ is a vector of ones. Thus, the extra looseness from using,
say, $\Vert\cdot\Vert_{\infty}$ instead of $\Vert\cdot\Vert_{2}$
will tend to be minimal when the graph is close to regular, and the
dependency is close to a constant value. For irregular graphs with
highly variable dependency, the looseness can be much larger.

\section{Dependency for Markov Random Fields\label{sec:Dependency-for-MRF}}

In order to establish that Gibbs sampling on a given MRF will be fast
mixing, it is necessary to compute (a bound on) the dependency matrix
$R$, as done in the following result. The proof of this result is
fairly long, and so it is postponed to the Appendix. Note that it
follows from several bounds on the dependency that are tighter, but
less computationally convenient.
\begin{thm}
\label{thm:R_upper_bound}The dependency matrix for a pairwise Markov
random field is bounded by 
\[
R_{ij}(\theta)\leq\max_{a,b}\frac{1}{2}\Vert\theta_{\cdot a}^{ij}-\theta_{\cdot b}^{ij}\Vert_{\infty}.
\]

\end{thm}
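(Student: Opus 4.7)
My plan is to reduce $R_{ij}$ to a TV bound between two Gibbs-type distributions that differ only in a single additive vector, and then bound that TV distance by half the sup-norm of the vector. Fix $i$ and $j$, and pick any $x,x'$ with $x_{-j}=x'_{-j}$; write $a:=x_{j}$ and $b:=x'_{j}$. Using the pairwise form in equation~\eqref{eq_mrf1}, the two conditionals appearing in Definition~\ref{R-def} factor as
\[
p(X_{i}=k\mid x_{-i})\propto\exp\!\bigl(\theta^{ij}(k,a)+h(k)\bigr),\qquad p(X_{i}=k\mid x'_{-i})\propto\exp\!\bigl(\theta^{ij}(k,b)+h(k)\bigr),
\]
where $h(k)$ collects every term in the exponent that involves $X_{i}$ but not $X_{j}$; this $h$ is identical in the two lines because $x_{-j}=x'_{-j}$. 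The two log-densities therefore differ (modulo an additive constant absorbed into normalization) by the vector $d_{k}:=\theta^{ij}(k,a)-\theta^{ij}(k,b)$, which is exactly the $k$-th entry of $\theta^{ij}_{\cdot a}-\theta^{ij}_{\cdot b}$.

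The problem then reduces to a one-line lemma: if $q(k)\propto\exp(s_{k})$ and $q'(k)\propto\exp(s'_{k})$ on a finite alphabet with $d=s-s'$, then $\Vert q-q'\Vert_{TV}\le\tfrac{1}{2}\Vert d\Vert_{\infty}$. I would prove it by combining Pinsker's inequality with Hoeffding's lemma. Using $Z'/Z=\mathbb{E}_{q}[e^{-d}]$, a short calculation gives
\[
\operatorname{KL}(q\Vert q')=\mathbb{E}_{q}[d]+\log\mathbb{E}_{q}[e^{-d}],
\]
and Hoeffding's lemma applied to the bounded random variable $-d$, whose range is $D:=\max_{k}d_{k}-\min_{k}d_{k}$, yields $\log\mathbb{E}_{q}[e^{-d}]\le-\mathbb{E}_{q}[d]+D^{2}/8$. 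Hence $\operatorname{KL}(q\Vert q')\le D^{2}/8$, and Pinsker's inequality gives $\Vert q-q'\Vert_{TV}\le D/4\le\tfrac{1}{2}\Vert d\Vert_{\infty}$, the last step because $D\le 2\Vert d\Vert_{\infty}$.

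Assembling the pieces, the lemma applied to $s_{k}=\theta^{ij}(k,a)+h(k)$ and $s'_{k}=\theta^{ij}(k,b)+h(k)$ gives
\[
\Vert p(X_{i}\mid x_{-i})-p(X_{i}\mid x'_{-i})\Vert_{TV}\le\tfrac{1}{2}\Vert\theta^{ij}_{\cdot a}-\theta^{ij}_{\cdot b}\Vert_{\infty},
\]
and taking the maximum over the allowed $x,x'$ (equivalently over $a,b\in\{1,\dots,L_{j}\}$) proves the theorem. The remark before the statement says the appendix proof proceeds via ``tighter but less computationally convenient'' bounds, which suggests sharper intermediate inequalities (for instance, shifting $d$ so that it is centered, which removes the factor-of-two loss in $D\le 2\Vert d\Vert_{\infty}$); in my plan the only substantive step is the Pinsker+Hoeffding lemma, and the rest is bookkeeping from the pairwise factorization. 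The main obstacle is that lemma: if one preferred to avoid Hoeffding, one could instead prove it by a direct coupling argument or by differentiating the interpolated family $q_{t}(k)\propto\exp(s'_{k}+t d_{k})$, but I expect Pinsker+Hoeffding to be the cleanest route.
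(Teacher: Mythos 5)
Your reduction is correct and coincides with the paper's: both arguments observe that the two conditionals in Definition \ref{R-def} are softmax (``multivariate sigmoid'') distributions whose logit vectors differ only by $d_k=\theta^{ij}(k,a)-\theta^{ij}(k,b)$, with the contribution of all other neighbours entering as a common additive shift. Where you genuinely diverge is in the key lemma bounding the TV distance between two such distributions. The paper \emph{exactly} evaluates the worst case over the shift $s$: its Lemma 4 shows $\max_{s}\Vert\text{sig}(x+s)-\text{sig}(y+s)\Vert_{1}=2\vert2\sigma(\tfrac{1}{2}\text{range}(y-x))-1\vert$ (equivalently $2\tanh(\text{range}/4)$), and then relaxes this via $\vert2\sigma(x/2)-1\vert\leq x/4$ and $\text{range}(d)\leq2\Vert d\Vert_{\infty}$ to reach the stated bound. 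You instead bound $\text{KL}(q\Vert q')=\mathbb{E}_{q}[d]+\log\mathbb{E}_{q}[e^{-d}]\leq\text{range}(d)^{2}/8$ by Hoeffding's lemma and apply Pinsker, arriving at the same intermediate estimate $\Vert q-q'\Vert_{TV}\leq\tfrac{1}{4}\text{range}(d)$ without ever identifying the worst-case shift; your steps (the KL identity, Hoeffding with $\lambda=1$, Pinsker giving $\sqrt{D^{2}/16}=D/4$, and $D\leq2\Vert d\Vert_{\infty}$) all check out, and the bound is uniform in the remaining configuration because $d$ does not depend on $h$. The trade-off: your route is more self-contained and avoids the somewhat delicate exact softmax calculation, but it forfeits the sharper intermediate form $R_{ij}\leq\max_{a,b}\vert2\sigma(\tfrac{1}{2}\text{range}(\theta_{\cdot a}^{ij}-\theta_{\cdot b}^{ij}))-1\vert$ that the paper records as one of its ``tighter but less computationally convenient'' bounds (your guess about recentering $d$ was close --- the extra tightness comes from the saturating $\tanh$, not only from the range-versus-norm slack). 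Since the theorem as stated only needs the $\tfrac{1}{2}\Vert\cdot\Vert_{\infty}$ form, your proof is a valid alternative.
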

Here, $\theta_{\cdot a}^{ij}$ indicates the $a-$th column of $\theta^{ij}$.
Note that the MRF can include univariate terms as self-edges with
no impact on the dependency bound, regardless of the strength of the
univariate terms. It can be seen easily that from the definition of
$R$ (Definition \ref{R-def}), for any $i$ the entry $R_{ii}$ for
self-edges $(i,i)$ should always be zero. One can, without loss of
generality, set each column of $\theta^{ii}$ to be the same, meaning
that $R_{ii}=0$ in the above bound.

\section{Euclidean Projection Operator\label{sec:Euclidean-Projection}}

The Euclidean distance between two MRFs parameterized respectively
by $\psi$ and $\theta$ is $\Vert\theta-\psi\Vert^{2}:=\sum_{(i,j)\in{\mathcal{E}}}\Vert\theta^{ij}-\psi^{ij}\Vert_{F}^{2}$.
This section considers projecting a given vector $\psi$ onto the
fast mixing set or, formally, finding a vector $\theta$ with minimum
Euclidean distance to $\psi$, subject to the constraint that a norm
$\Vert\cdot\Vert_{*}$ applied to the bound on the dependency matrix
$R$ is less than some constant $c$. Euclidean projection is considered
because, first, it is a straightforward measure of the closeness between
two parameters and, second, it is the building block of the projected
gradient descent for projection in other distance measures. To begin
with, we do not specify the matrix norm $\Vert\cdot\Vert_{*}$, as
it could be any sub-multiplicative norm (Section \ref{sec_cond}).

Thus, in principle, we would like to find $\theta$ to solve
\begin{equation}
\text{proj}_{c}(\psi):=\underset{\theta:\Vert R(\theta)\Vert_{*}\leq c}{\text{argmin }}\Vert\theta-\psi\Vert^{2}.\label{eq:proj0}
\end{equation}
Unfortunately, while convex, this optimization turns out to be somewhat
expensive to solve, due to a lack of smoothness Instead, we introduce
a matrix $Z$, and constrain that $Z_{ij}\geq R_{ij}(\theta)$, where
$R_{ij}(\theta)$ is the bound on dependency in Thm \ref{thm:R_upper_bound}
(as an equality). We add an extra quadratic term $\alpha\Vert Z-Y\Vert_{F}^{2}$
to the objective, where $Y$ is an arbitrarily given matrix and $\alpha>0$
is trade-off between the smoothness and the closeness to original
problem (\ref{eq:proj0}). The smoothed projection operator is
\begin{equation}
\text{proj}_{\mathcal{C}}(\psi,Y):=\underset{(\theta,Z)\in\mathcal{C}}{\text{argmin }}\Vert\theta-\psi\Vert^{2}+\alpha\Vert Z-Y\Vert_{F}^{2},\,\,\,\,\mathcal{C}=\{(\theta,Z):Z_{ij}\geq R_{ij}(\theta),\Vert Z\Vert_{*}\leq c\}.\label{eq:proj1}
\end{equation}

If $\alpha=0$, this yields a solution that is identical to that of
Eq. \ref{eq:proj0}. However, when $\alpha=0$, the objective in Eq.
\ref{eq:proj1} is not strongly convex as a function of $Z$, which
results in a dual function which is non-smooth, meaning it must be
solved with a method like subgradient descent, with a slow convergence
rate. In general, of course, the optimal point of Eq. \ref{eq:proj1}
is different to that of Eq. \ref{eq:proj0}. However, the main usage
of the Euclidean projection operator is the projection step in the
projected gradient descent algorithm for divergence minimization.
In these tasks the smoothed projection operator can be directly used
in the place of the non-smoothed one without changing the final result.
In situations when the exact Euclidean projection is required, it
can be done by initializing $Y_{1}$ arbitrarily and repeating $(\theta_{k+1},Y_{k+1})\leftarrow\text{proj}_{\mathcal{C}}(\psi,Y_{k})$,
for $k=1,2,\dots$ until convergence.

\subsection{Dual Representation\label{sub:Dual-Representation}}
\begin{thm}
\label{thm:dual-rep}Eq. \ref{eq:proj1} has the dual representation
\begin{equation}
\begin{aligned} & \underset{\sigma,\phi,\Delta,\Gamma}{\text{maximize}} &  & g(\sigma,\phi,\Delta,\Gamma)\\
 & \text{subject to} &  & \sigma_{ij}(a,b,c)\geq0,\phi_{ij}(a,b,c)\geq0, &  & \forall(i,j)\in\mathcal{E},a,b,c
\end{aligned}
,\label{eq:the-dual}
\end{equation}
where 
\begin{align*}
g(\sigma,\phi,\Delta,\Gamma) & =\min_{Z}h_{1}(Z;\sigma,\phi,\Delta,\Gamma)+\min_{\theta}h_{2}(\theta;\sigma,\phi)\\
h_{1}(Z;\sigma,\phi,\Delta,\Gamma) & =-\text{tr}(Z\Lambda^{T})+\text{I}(\Vert Z\Vert_{*}\leq c)+\alpha\Vert Z-Y\Vert_{F}^{2}\\
h_{2}(\theta;\sigma,\phi) & =\Vert\theta-\psi\Vert^{2}+\frac{1}{2}\sum_{i,j\in\mathcal{E}}\sum_{a,b,c}\big(\sigma_{ij}(a,b,c)-\phi_{ij}(a,b,c)\big)(\theta_{c,a}^{ij}-\theta_{c,b}^{ij}),
\end{align*}
in which $\Lambda_{ij}:=\Delta_{ij}D_{ij}+\hat{\Gamma}_{ij}+\sum_{a,b,c}\sigma_{ij}(a,b,c)+\phi_{ij}(a,b,c)$,
where $\hat{\Gamma}_{ij}:=\left\{ \begin{array}{ll}
\Gamma_{ij} & \text{if \ensuremath{(i,j)\in\mathcal{E}}}\\
-\Gamma_{ij} & \text{if \ensuremath{(j,i)\in\mathcal{E}}}
\end{array}\right.$, and $D$ is an indicator matrix with $D_{ij}=0$ if $(i,j)\in\mathcal{E}\text{ or }(j,i)\in\mathcal{E}$,
and $D_{ij}=1$ otherwise. The dual variables $\sigma_{ij}$ and $\phi_{ij}$
are arrays of size $L_{j}\times L_{i}\times L_{i}$ for all pairs
$(i,j)\in\mathcal{E}$ while $\Delta$ and $ $\textup{$\Gamma$ are
of size $n\times n$.}
\end{thm}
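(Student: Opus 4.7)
The plan is a standard Lagrangian duality derivation applied to the convex program in Eq.~(\ref{eq:proj1}). I would first make every constraint of the set $\mathcal{C}$ explicit. Using Theorem~\ref{thm:R_upper_bound}, the constraint $Z_{ij}\ge R_{ij}(\theta)$ for $(i,j)\in\mathcal{E}$ is equivalent to the two families of linear inequalities $\tfrac{1}{2}(\theta^{ij}_{c,a}-\theta^{ij}_{c,b})\le Z_{ij}$ and $\tfrac{1}{2}(\theta^{ij}_{c,b}-\theta^{ij}_{c,a})\le Z_{ij}$, ranging over $a,b\in\{1,\dots,L_i\}$ and $c\in\{1,\dots,L_j\}$. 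In addition I would treat $Z$ as a full symmetric $n\times n$ matrix supported on the edge set by explicitly imposing $Z_{ij}=Z_{ji}$ for every pair and $Z_{ij}=0$ whenever $D_{ij}=1$; this is what makes the norm $\|Z\|_*$ meaningful in the full matrix sense used in Section~\ref{sec:Background-Theory}.

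Next, I would attach nonnegative multipliers $\sigma_{ij}(a,b,c)$ and $\phi_{ij}(a,b,c)$ to the two inequality families, a sign-free multiplier $\Delta_{ij}$ to the non-edge equation $D_{ij}Z_{ij}=0$, and a sign-free multiplier $\Gamma_{ij}$ (one per unordered pair) to the symmetry equation $Z_{ij}-Z_{ji}=0$. The norm constraint is kept inside the $Z$-minimization as the indicator $\mathrm{I}(\|Z\|_*\le c)$. The resulting Lagrangian splits additively over $\theta$ and $Z$. Grouping the $\theta$-dependent terms recovers $h_2(\theta;\sigma,\phi)$ directly, since the coefficient of $\theta^{ij}_{c,a}-\theta^{ij}_{c,b}$ is precisely $\tfrac{1}{2}(\sigma_{ij}(a,b,c)-\phi_{ij}(a,b,c))$. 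Reading the coefficient of each $Z_{ij}$ in the remaining terms yields $D_{ij}\Delta_{ij}+\hat\Gamma_{ij}-\sum_{a,b,c}(\sigma_{ij}(a,b,c)+\phi_{ij}(a,b,c))$, which equals $-\Lambda_{ij}$. Together with the quadratic penalty and the norm indicator this is exactly $h_1(Z;\sigma,\phi,\Delta,\Gamma)$, so the dual objective is $g=\min_\theta h_2+\min_Z h_1$.

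Finally, to conclude that the dual of Eq.~(\ref{eq:proj1}) is Eq.~(\ref{eq:the-dual}) I would invoke strong duality: the primal is convex in $(\theta,Z)$, and Slater's condition is clearly satisfied for any $c>0$ (the point $\theta=0$, $Z=0$ lies strictly inside every inequality constraint). This gives the equivalence with the stated nonnegativity constraints on $\sigma,\phi$ and free $\Delta,\Gamma$. The main obstacle, and the reason for the asymmetric definition of $\hat\Gamma$, is sign bookkeeping: $\sigma$ and $\phi$ are indexed only by edges $(i,j)\in\mathcal{E}$ with $i\le j$, but $Z$ is a full symmetric matrix, so the symmetry multiplier $\Gamma_{ij}$ acts with opposite sign on the ``transposed'' entry $Z_{ji}$, and only for non-adjacent pairs does the $\Delta$ term appear. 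Verifying that all of these contributions assemble into the single expression $\Lambda_{ij}$ stated in the theorem is the one place where care is required, though no subtle argument is needed beyond careful accounting.
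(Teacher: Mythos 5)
Your proposal follows essentially the same route as the paper: write out the feasible set as explicit linear inequality, symmetry, and support constraints on $(\theta,Z)$, attach multipliers $\sigma,\phi\ge 0$ and free $\Delta,\Gamma$, keep the norm ball as an indicator inside the $Z$-minimization, observe the Lagrangian separates into the $\theta$-part $h_2$ and the $Z$-part $h_1$, and invoke Slater plus convexity for strong duality. The only discrepancy is the sign you report for the $\Delta$ and $\hat\Gamma$ contributions to the coefficient of $Z_{ij}$, which is immaterial since those multipliers are unconstrained in sign.
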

The proof of this is in the Appendix. Here, $\text{I}(\cdot)$ is
the indicator function with $\text{I}(x)=0$ when $x$ is true and
$\text{I}(x)=\infty$ otherwise.

Being a smooth optimization problem with simple bound constraints,
Eq. \ref{eq:the-dual} can be solved with LBFGS-B \citep{Byrd1995limitedmemoryalgorithm}.
For a gradient-based method like this to be practical, it must be
possible to quickly evaluate $g$ and its gradient. This is complicated
by the fact that $g$ is defined in terms of the minimization of $h_{1}$
with respect to $Z$ and $h_{2}$ with respect to $\theta$. We discuss
how to solve these problems now. We first consider the minimization
of $h_{2}$. This is a quadratic function of $\theta$ and can be
solved analytically via the condition that $\frac{\partial}{\partial\theta}h_{2}(\theta;\sigma,\phi)=0$.
The closed form solution is 
\[
\theta_{c,a}^{ij}=\psi_{c,a}^{ij}-\frac{1}{4}\left[\sum_{b}\sigma_{ij}(a,b,c)-\sum_{b}\sigma_{ij}(b,a,c)-\sum_{b}\phi_{ij}(a,b,c)+\sum_{b}\phi_{ij}(b,a,c)\right]
\]
$\forall(i,j)\in\mathcal{E},1\leq a,c\leq m.$. The time complexity
is linear in the size of $\psi$.

Minimizing $h_{1}$ is more involved. We assume to start that there
exists an algorithm to quickly project a matrix onto the set $\{Z:\Vert Z\Vert_{*}\leq c\}$,
i.e. to solve the optimization problem of 
\begin{equation}
\min_{\Vert Z\Vert_{*}\leq c}\Vert Z-A\Vert_{F}^{2}.\label{eq:_ballproj}
\end{equation}
 Then, we observe that $\arg\min_{Z}h_{1}$ is equal to
\begin{align*}
\arg\min_{Z}-\text{tr}(Z\Lambda^{T})+I(\Vert Z\Vert_{*}\leq c)+\alpha\Vert Z-Y\Vert_{F}^{2} & =\arg\min_{\Vert Z\Vert_{*}\leq c}\Vert Z-(Y+\frac{1}{2\alpha}\Lambda)\Vert_{F}^{2}.
\end{align*}
For different norms $\Vert\cdot\Vert_{*}$, the projection algorithm
will be different and can have a large impact on efficiency. We will
discuss in the followings sections the choices of $\Vert\cdot\Vert_{*}$
and an algorithm for the $\infty$-norm. 

Finally, once $h_{1}$ and $h_{2}$ have been solved, the gradient
of $g$ is (by Danskin's theorem \citep{Bertsekas2004NonlinearProgramming})
\begin{align*}
\frac{\partial g}{\partial\Delta_{ij}}= & -D_{ij}\hat{Z}_{ij}, &  & \frac{\partial g}{\partial\Gamma_{ij}} & = & \hat{Z}_{ji}-\hat{Z}_{ij},\\
\frac{\partial g}{\partial\sigma_{ij}(a,b,c)}= & \frac{1}{2}(\hat{\theta}_{c,a}^{ij}-\hat{\theta}_{c,b}^{ij})-\hat{Z}_{ij}, &  & \frac{\partial g}{\partial\phi_{ij}(a,b,c)} & = & -\partial_{\sigma_{ij}(a,b,c)}g,
\end{align*}
where $\hat{Z}$ and $\hat{\theta}$ represent the solutions to the
subproblems.

\subsection{Spectral Norm}

When $\Vert\cdot\Vert_{*}$ is set to the spectral norm, i.e. the
largest singular value of a matrix, the projection in Eq. \ref{eq:_ballproj}
can be performed by thresholding the singular values of $A$ \citep{Domke2013ProjectingIsingModel}.
Theoretically, using spectral norm will give a tighter bound on $Z$
than other norms (Section \ref{sec:Background-Theory}). However,
computing a full singular value decomposition can be impractically
slow for a graph with a large number of variables.

\subsection{$\infty$-norm}

Here, we consider setting $\Vert\cdot\Vert_{*}$ to the $\infty$-norm,
$\Vert A\Vert_{\infty}=\max_{i}\sum_{j}\vert A_{ij}\vert$, which
measures the maximum $l_{1}$ norm of the rows of $A$. This norm
has several computational advantages. Firstly, to project a matrix
onto a $\infty$-norm ball $\{A:\Vert A_{\infty}\Vert\leq c\}$, we
can simply project each row $a_{i}$ of the matrix onto the $l_{1}$-norm
ball $\{a:\Vert a\Vert_{1}\leq c\}$. Duchi et al. \citep{Duchi2008Efficientprojectionsonto}
provide a method linear in the number of nonzeros in $a$ and logarithmic
in the length of $a$. Thus, if $Z$ is an $n\times n$, matrix, Eq.
\ref{eq:_ballproj} for the $\infty$-norm can be solved in time $n^{2}$
and, for sufficiently sparse matrices, in time $n\log n$.

A second advantage of the $\infty$-norm is that (unlike the spectral
norm) projection in Eq. \ref{eq:_ballproj} preserves the sparsity
of the matrix. Thus, one can disregard the matrix $D$ and dual variables
$\Delta$ when solving the optimization in Theorem \ref{thm:dual-rep}.
This means that $Z$ itself can be represented sparsely, i.e. we only
need variables for those $(i,j)\in\mathcal{E}$. These simplifications
significantly improve the efficiency of projection, with some tradeoff
in accuracy.

\section{Projection in Divergences}

In this section, we want to find a distribution $p(x;\theta)$ in
the fast mixing family closest to a target distribution $p(x;\psi)$
in some divergence $D(\psi,\theta)$. The choice of divergence depends
on convenience of projection, the approximate family and the inference
task. We will first present a general algorithmic framework based
on projected gradient descent (Algorithm \ref{alg_pgd}), and then
discuss the details of several previously proposed divergences \citep{Minka2005Divergencemeasuresand,Domke2013ProjectingIsingModel}.

\subsection{General algorithm framework for divergence minimization}

The problem of projection in divergences is formulated as 
\begin{equation}
\min_{\theta\in\mathcal{\bar{C}}}D(\psi,\theta),\label{eq_gdproj}
\end{equation}
$D(\cdot,\cdot)$ is some divergence measure, and $\mathcal{\bar{C}}:=\{\theta:\exists Z,s.t.(\theta,Z)\in C\}$,
where $C$ is the feasible set in Eq. \ref{eq:proj1}. Our general
strategy for this is to use projected gradient descent to solve the
optimization
\begin{equation}
\min_{(\theta,Z)\in\mathcal{C}}D(\psi,\theta),\label{eq:gdproj1}
\end{equation}
using the joint operator to project onto $\mathcal{C}$ described
in Section \ref{sec:Euclidean-Projection}.

\begin{algorithm}[tb]
\protect\caption{Projected gradient descent for divergence projection}

\label{alg_pgd} \begin{algorithmic}\STATE Initialize ($\theta_{1}$,
$Z_{1}$), $k\leftarrow1$. \REPEAT\STATE $\theta'\leftarrow\theta_{k}-\lambda\nabla_{\theta}D(\psi,\theta_{k})$
\STATE $(\theta_{k+1},Z_{k+1})\leftarrow\text{proj}_{\mathcal{C}}(\theta',Z_{k})$
\STATE $k\leftarrow k+1$ \UNTIL{$convergence$} \end{algorithmic} 
\end{algorithm}

For different divergences, the only difference in projection algorithm
is the evaluation of the gradient $\nabla_{\theta}D(\psi,\theta)$.
It is clear that if $(\theta^{*},Z^{*})$ is the solution of Eq. \ref{eq:gdproj1},
then $\theta^{*}$ is the solution of \ref{eq_gdproj}.

\begin{wrapfigure}{O}{0.53\columnwidth}%
\begin{centering}
\textbf{\vspace{-5bp}
\includegraphics[width=0.5\textwidth]{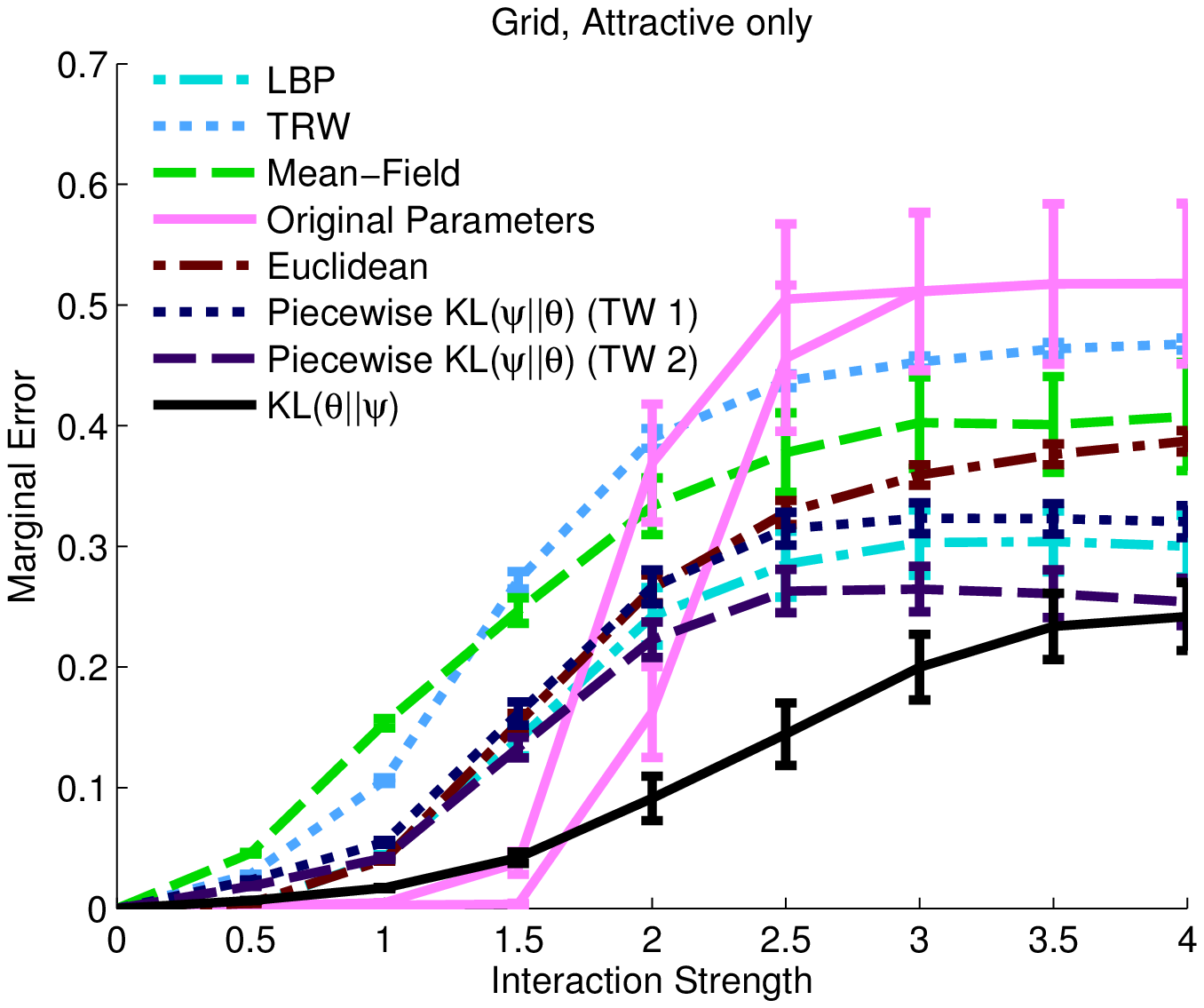}} 
\par\end{centering}

\begin{centering}
\includegraphics[width=0.5\textwidth]{figures/strength_0\lyxdot 50}
\par\end{centering}

\protect\caption{Mean univariate marginal error on $16\times16$ grids (top) with attractive
interactions and median-density random graphs (bottom) with mixed
interactions, comparing 30k iterations of Gibbs sampling after projection
(onto the $l_{\infty}$ norm) to variational methods. The original
parameters also show a lower curve with $10^{6}$ samples.\label{fig:ising-strength-vs-error}}
\vspace{-70bp}
\end{wrapfigure}%

\subsection{Divergences}

In this section, we will discuss the different choices of divergences
and corresponding projection algorithms.

\subsubsection{KL-divergence\label{sub:KL}}

\begin{wrapfigure}{O}{0.5\columnwidth}%
\begin{centering}
\vspace{-40pt}
\includegraphics[width=0.48\textwidth]{figures/time_a_2\lyxdot 00} 
\par\end{centering}

\begin{centering}
\includegraphics[width=0.48\textwidth]{figures/rnd_time_0\lyxdot 50_3\lyxdot 00}
\par\end{centering}

\protect\caption{Examples of the accuracy of obtained marginals vs. the number of samples.
Top: Grid graphs. Bottom: Median-Density Random graphs.\label{fig:ising-time-vs-error}}
\vspace{-10pt}
\end{wrapfigure}%
The KL-divergence $\text{KL}(\psi\Vert\theta):=\sum_{x}p(x;\psi)\log\frac{p(x;\psi)}{p(x;\theta)}$
is arguably the optimal divergence for marginal inference because
it strives to preserve the marginals of $p(x;\theta)$ and $p(x;\psi)$.
However, projection in KL-divergence is intractable here because the
evaluation of the gradient $\nabla_{\theta}\text{KL}(\psi\Vert\theta)$
requires the marginals of distribution $\psi$.

\subsubsection{Piecewise KL-divergence\label{sub:PKL}}

One tractable surrogate of $\text{KL}(\psi\Vert\theta)$ is the piecewise
KL-divergence \citep{Domke2013ProjectingIsingModel} defined over
some tractable subgraphs. Here, $D(\psi,\theta):=\max_{T\in\mathcal{T}}\text{KL}(\psi_{T}\Vert\theta_{T}),$
where $\mathcal{T}$ is a set of low-treewidth subgraphs. The gradient
can be evaluated as $\nabla_{\theta}D(\psi,\theta)=\nabla_{\theta}\text{KL}(\psi_{T^{*}}\Vert\theta_{T^{*}})$
where $T^{*}=\arg\max_{T\in\mathcal{T}}\text{KL}(\psi_{T}\Vert\theta_{T})$.
For any $T$ in $\mathcal{T}$, $\text{KL}(\psi_{T}\Vert\theta_{T})$
and its gradient can be evaluated by the junction-tree algorithm.

\subsubsection{Reversed KL-divergence\label{sub:RKL}}

The ``reversed'' KL-divergence $\text{KL}(\theta\Vert\psi)$ is
minimized by mean-field methods. In general $\text{KL}(\theta\Vert\psi)$
is inferior to $\text{KL}(\psi\Vert\theta)$ for marginal inference
since it tends to underestimate the support of the distribution \citep{Minka2005Divergencemeasuresand}.
Still, it often works well in practice. $\nabla_{\theta}\text{KL}(\theta\Vert\psi)$
can computed as % \begin{equation}
$\nabla_{\theta}\text{KL}(\theta\Vert\psi)=\sum_{x}p(x;\theta)(\theta-\psi)\cdot f(x)\big(f(x)-\mu(\theta)\big)$
% \end{equation}
, which can be approximated by samples generated from $p(x;\theta)$
\citep{Domke2013ProjectingIsingModel}. In implementation, we maintain
a ``pool'' of samples, each of which is updated by a single Gibbs
step after each iteration of Algorithm \ref{alg_pgd}.

\section{Experiments}

The experiments below take two stages: first, the parameters are projected
(in some divergence) and then we compare the accuracy of sampling
with the resulting marginals. We focus on this second aspect. However,
we provide a comparison of the computation time for various projection
algorithms in Table \ref{tab_timing}, and when comparing the accuracy
of sampling with a given amount of time, provide two curves for sampling
with the original parameters, where one curve has an extra amount
of sampling effort roughly approximating the time to perform projection
in the reversed KL divergence.

\subsection{Synthetic MRFs}

Our first experiment follows that of \citep{Hazan2008ConvergentMessagePassing,Domke2013ProjectingIsingModel}
in evaluating the accuracy of approximation methods in marginal inference.
In the experiments, we approximate randomly generated MRF models with
rapid-mixing distributions using the projection algorithms described
previously. Then, the marginals of fast mixing approximate distributions
are estimated by running a Gibbs chain on each distribution. These
are compared against exact marginals as computed by the junction tree
algorithm. We use the mean absolute difference of the marginals $\vert p(X_{i}=1)-q(X_{i}=1)\vert$
as the accuracy measure. We compare to Naive mean-field (MF), Gibbs
sampling on original parameters (Gibbs), and Loopy belief propagation
(LBP). Many other methods have been compared against a similar benchmark
\citep{Globerson2007Approximateinferenceusing,Hazan2008ConvergentMessagePassing}.

While our methods are for general MRFs, we test on Ising potentials
because this is a standard benchmark. Two graph topologies are used:
two-dimensional $16\times16$ grids and 10 node random graphs, where
each edge is independently present with probability $p_{e}\in\{0.3,0.5,0.7\}$.
Node parameters $\theta_{i}$ are uniform from $[-d_{n},d_{n}]$ with
fixed field strength $d_{n}=1.0$. Edge parameters $\theta_{ij}$
are uniform from $[-d_{e},d_{e}]$ or $[0,d_{e}]$ to obtain mixed
or attractive interactions respectively, with interaction strengths
$d_{e}\in\{0,0.5,\dots,4\}$. Figure \ref{fig:ising-strength-vs-error}
shows the average marginal error at different interaction strengths.
Error bars show the standard error normalized by the number of samples,
which can be interpreted as a $68.27\%$ confidence interval. We also
include time-accuracy comparisons in Figure \ref{fig:ising-time-vs-error}.
All results are averaged over 50 random trials. We run Gibbs long
enough ( $10^{6}$ samples) to get a fair comparison in terms of running
time. 

Except where otherwise stated, parameters are projected onto the ball
$\{\theta:\Vert R(\theta)\Vert_{\infty}\leq c\}$, where $c=2.5$
is larger than the value of $c=1$ suggested by the proofs above.
Better results are obtained by using this larger constraint set, presumably
because of looseness in the bound. For piecewise projection, grids
use simple vertical and horizontal chains of treewidth either one
or two. For random graphs, we randomly generate spanning trees until
all edges are covered. Gradient descent uses a fixed step size of
$\lambda=0.1$. A Gibbs step is one ``systematic-scan'' pass over
all variables between. The reversed KL divergence maintains a pool
of 500 samples, each of which is updated by a single Gibbs step in
each iteration.

We wish to compare the trade-off between computation time and accuracy
represented by the choice between the use of the $\infty$ and spectral
norms. We measure the running time on $16\times16$ grids in Table
\ref{tab_timing}, and compare the accuracy in Figure \ref{fig:SP_vs_Inf}.

\begin{wrapfigure}{O}{0.5\columnwidth}%
\begin{centering}
\vspace{-20pt}
\includegraphics[bb=0bp 5bp 401bp 333bp,clip,width=0.48\columnwidth]{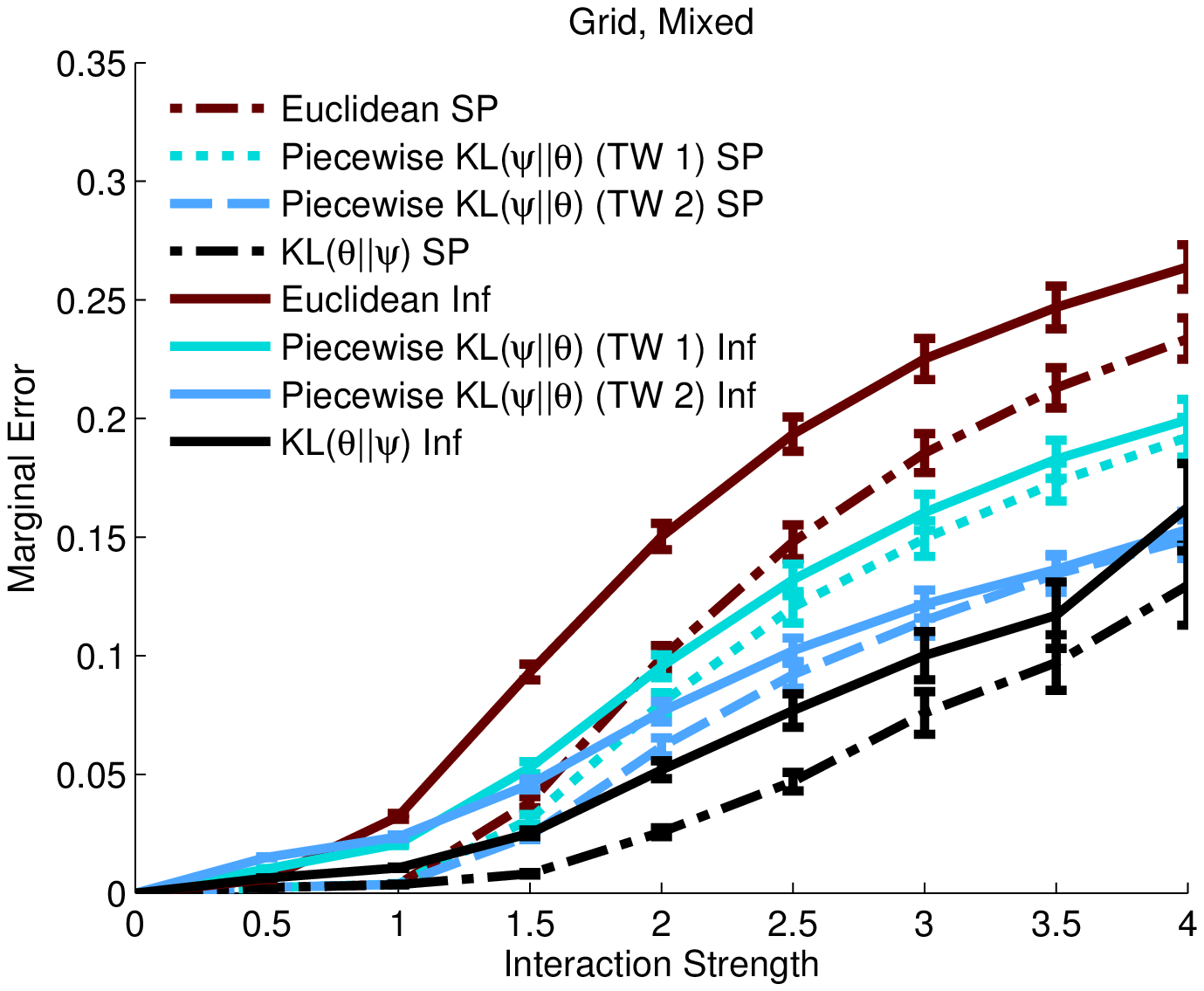}
\par\end{centering}

\protect\caption{The marginal error using $\infty$-norm projection (solid lines) and
spectral-norm projection (dotted lines) on 16x16 Ising grids.\label{fig:SP_vs_Inf}\vspace{-15pt}
}
\end{wrapfigure}%
The appendix contains results for a three-state Potts model on an
$8\times8$ grid, as a test of the multivariate setting. Here, the
intractable divergence $KL(\psi\Vert\theta)$ is included for reference,
with the projection computed with the help of the junction tree algorithm
for inference.
\begin{table*}[t]
\setlength{\tabcolsep}{3.0pt} \protect\caption{Running times on $16\times16$ grids with attractive interactions.
Euclidean projection converges in around 5 LBFGS-B iterations. Piecewise
projection (with a treewidth of 1) and reversed KL projection use
60 gradient descent steps. All results use a single core of a Intel
i7 860 processor.}

\begin{centering}
\begin{tabular}{|c|cc|cc|cc|cc|}
\hline 
 & \multicolumn{2}{c|}{Gibbs} & \multicolumn{2}{c|}{{\small{}Euclidean}} & \multicolumn{2}{c|}{{\small{}Piecewise}} & \multicolumn{2}{c|}{{\small{}Reversed-KL}}\tabularnewline
\cline{2-9} 
 & {\small{}30k Steps} & {\small{}$10^{6}$ Steps} & $l_{\infty}$ norm & $l_{2}$ norm & $l_{\infty}$ norm & $l_{2}$ norm & $l_{\infty}$ norm & $l_{2}$ norm\tabularnewline
\hline 
$d_{e}=1.5$ & 0.67s & 22.42s & 1.50s & 25.63s & 12.87s & 45.26s & 13.13s & 66.81s\tabularnewline
$d_{e}=3.0$ & 0.67s & 22.42s & 3.26s & 164.34s & 20.73s & 211.08s & 20.12s & 254.25s\tabularnewline
\hline 
\end{tabular}
\par\end{centering}

\label{tab_timing}
\end{table*}

\subsection{Berkeley binary image denoising}

This experiment evaluates various methods for denoising binary images
from the Berkeley segmentation dataset downscaled from $300\times200$
to $120\times80$. The images are binarized by setting $Y_{i}=1$
if pixel $i$ is above the average gray scale in the image, and $Y_{i}=-1$.
The noisy image $X$ is created by setting: $X_{i}=\frac{Y_{i}+1}{2}_{i}(1-t_{i}^{1.25})+\frac{1-Y_{i}}{2}t_{i}^{1.25}$,
in which $t_{i}$ is sampled uniformly from $[0,1]$. For inference
purposes, the conditional distribution $Y$ is modeled as $P(Y\vert X)\propto\exp\left(\beta\sum_{ij}Y_{i}Y_{j}+\frac{\alpha}{2}\sum_{i}(2X_{i}-1)Y_{i}\right)$,
where the pairwise strength $\beta>0$ encourages smoothness. On this
attractive-only Ising potential, the Swendsen-Wang method {\small{}\citep{Swendsen1987Nonuniversalcriticaldynamics}}
mixes rapidly, and so we use the resulting samples to estimate the
ground truth. The parameters $\alpha$ and $\beta$ are heuristically
chosen to be $0.5$ and $0.7$ respectively.

\begin{wrapfigure}{O}{0.5\columnwidth}%
\begin{centering}
\vspace{-20pt}
\includegraphics[width=0.48\textwidth]{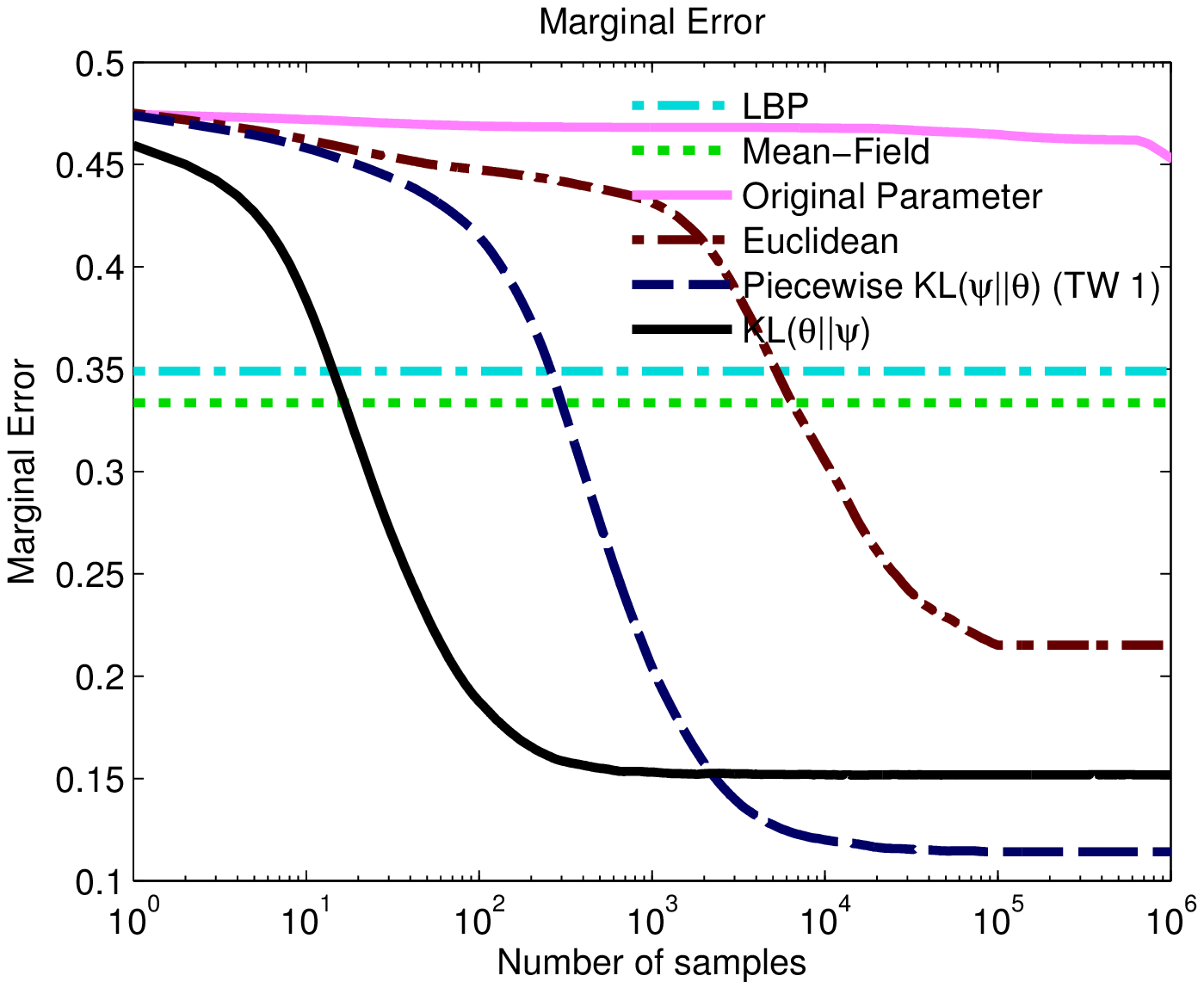}
\par\end{centering}

\centering{} \protect\caption{Average marginal error  on the Berkeley segmentation dataset. \label{fig:Average-marginal-denoising}}
\vspace{-30pt}
\end{wrapfigure}%
Figure \ref{fig:Average-marginal-denoising} shows the decrease of
average marginal error. To compare running time, Euclidean and $K(\theta\Vert\psi)$
projection cost approximately the same as sampling $10^{5}$ and $4.8\times10^{5}$
samples respectively. Gibbs sampling on the original parameter converges
very slowly. Sampling the approximate distributions from our projection
algorithms converge quickly in less than $10^{4}$ samples.

\section{Conclusions}

We derived sufficient conditions on the parameters of an MRF to ensure
fast-mixing of univariate Gibbs sampling, along with an algorithm
to project onto this set in the Euclidean norm. As an example use,
we explored the accuracy of samples obtained by projecting parameters
and then sampling, which is competitive with simple variational methods
as well as traditional Gibbs sampling. Other possible applications
of fast-mixing parameter sets include constraining parameters during
learning.

\subsection*{Acknowledgments}

NICTA is funded by the Australian Government through the Department
of Communications and the Australian Research Council through the
ICT Centre of Excellence Program.

\newpage{}

\clearpage{}

\bibliographystyle{plain}
\bibliography{/Users/jdomke/Dropbox/Papers/Bibliography/justindomke}

\newpage{}

\section{Appendix}

\subsection{Proof of MRF Dependency Bound}

This section gives a proof of the bound on the dependency matrix stated
in Section \ref{sec:Dependency-for-MRF} above.

To start with, we observe the conditional distribution of a single
variable $x_{i}$ when all others are fixed, which is easy to calculate.
\begin{lem}
The conditional probability of one variable given all others is 
\[
p(X_{i}=\cdot\vert X_{-i}=x_{-i})=\text{sig}\left(\sum_{k\in N(i)}\theta_{\cdot x_{k}}^{ik}\right),
\]
where $\text{sig}$ is the ``multivariate sigmoid'' defined as $(v)=\exp(v)/1^{T}\exp(v)$,
and $N(i)$ is the set of indices that are in a pair with $i$.
\end{lem}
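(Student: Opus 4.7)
The plan is to derive the conditional directly from the joint distribution (\ref{eq_mrf1}) by isolating the dependence on $x_i$ and then normalizing. Since $p(X_i = x_i \mid X_{-i} = x_{-i}) \propto p(x; \theta)$ as a function of $x_i$, with the proportionality absorbing $A(\theta)$ and every factor that does not depend on $x_i$, the whole task reduces to identifying which edge potentials actually involve $x_i$ and packaging them into a vector indexed by the states of $X_i$.

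First, I would split the sum $\sum_{(j,k) \in \mathcal{E}} \theta^{jk}(x_j, x_k)$ into the edges that touch node $i$ and those that do not. The latter contribute only a multiplicative constant in $x_i$ and drop out under proportionality. For the former, I would use the paper's convention that for $i > k$ with $(k,i) \in \mathcal{E}$, the symbol $\theta^{ik}$ denotes the transpose of $\theta^{ki}$, so that $\theta^{ik}(x_i, x_k) = \theta^{ki}(x_k, x_i)$ regardless of the ordering. This lets me collapse both kinds of incident edges into the single unified sum $\sum_{k \in N(i)} \theta^{ik}(x_i, x_k)$.

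Next, I would rewrite each scalar $\theta^{ik}(x_i, x_k)$ as the $x_i$-th entry of the column vector $\theta^{ik}_{\cdot x_k}$ (the column of $\theta^{ik}$ indexed by $x_k$). Summing these vectors over $k \in N(i)$ gives a single vector whose $x_i$-th component is $\sum_{k \in N(i)} \theta^{ik}(x_i, x_k)$. Therefore, as a function of $x_i$,
\[
p(X_i = \cdot \mid X_{-i} = x_{-i}) \;\propto\; \exp\!\Big(\sum_{k \in N(i)} \theta^{ik}_{\cdot x_k}\Big),
\]
with the exponential applied componentwise.

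Finally, I would normalize: dividing by $\mathbf{1}^T \exp(\cdot)$ yields exactly the multivariate sigmoid of the argument, matching the claimed expression. There is no hard obstacle here; the only point that needs care is the bookkeeping for the transpose convention when $i > k$, so that both the ``incoming'' edges $(k,i)$ and the ``outgoing'' edges $(i,k)$ are absorbed into the single sum over the neighborhood $N(i)$ without double-counting.
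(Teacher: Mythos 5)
Your proposal is correct: conditioning, discarding the edges not incident to $i$, using the transpose convention $\theta^{ik}=(\theta^{ki})^T$ to fold both orderings into one sum over $N(i)$, and normalizing by $1^{T}\exp(\cdot)$ is exactly the computation needed. The paper gives no explicit proof of this lemma (it is stated as an easy observation), and your derivation is the standard argument it implicitly relies on, so there is nothing to contrast.
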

Now, to compute the influence matrix, we must consider what configuration
of all the variables other than $x_{i}$ and $x_{j}$ will allow a
change in $x_{j}$ to induce the greatest change in $x_{i}$ (Definition
\ref{R-def}). 
\begin{lem}
The dependency matrix is given by 
\begin{align*}
R_{ij} & =\max_{x,y:x_{-j}=y_{-j}}\frac{1}{2}\Vert\text{sig}(\theta_{\cdot x_{j}}^{ij}+s)-\text{sig}(\theta_{\cdot y_{j}}^{ij}+s)\Vert_{1}\\
s & =\sum_{k\in N(i)\backslash j}\theta_{\cdot x_{k}}^{ik}
\end{align*}
 \end{lem}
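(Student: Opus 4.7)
The plan is to expand the definition of $R_{ij}$ directly, substitute in the closed-form expression for the single-site conditional given by the preceding lemma, and then identify the common summand that can be pulled out across the two conditionals.

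First I would write out $R_{ij}=\max_{x,x':x_{-j}=x'_{-j}}\Vert p(X_{i}\vert x_{-i})-p(X_{i}\vert x'_{-i})\Vert_{TV}$ and use the standard identity $\Vert p-q\Vert_{TV}=\tfrac{1}{2}\Vert p-q\Vert_{1}$ to convert total variation to an $\ell_1$ distance between probability vectors. Then I would invoke the preceding lemma to replace each conditional by the multivariate sigmoid of a sum of parameter columns indexed by the neighbors' values:
\[
p(X_{i}\vert x_{-i}) = \text{sig}\!\left(\sum_{k\in N(i)}\theta^{ik}_{\cdot x_{k}}\right),\quad p(X_{i}\vert x'_{-i}) = \text{sig}\!\left(\sum_{k\in N(i)}\theta^{ik}_{\cdot x'_{k}}\right).
\]

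Next I would exploit the constraint $x_{-j}=x'_{-j}$, which forces $x_{k}=x'_{k}$ for every $k\neq j$. Assuming the case of interest $i\neq j$ (the self-entries $R_{ii}$ being handled separately as noted after Theorem~\ref{thm:R_upper_bound}), the sums over $k\in N(i)$ agree in every term except the one for $k=j$ (if $j\in N(i)$; otherwise the two sums are identical and $R_{ij}=0$, consistent with the formula since $\theta^{ij}=0$). Splitting off that term, both sums share the common vector $s:=\sum_{k\in N(i)\setminus j}\theta^{ik}_{\cdot x_{k}}$, and the conditionals become $\text{sig}(\theta^{ij}_{\cdot x_{j}}+s)$ and $\text{sig}(\theta^{ij}_{\cdot y_{j}}+s)$, with $y$ playing the role of $x'$.

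Combining these steps yields exactly the stated expression. The only mild subtlety, and the one I would double-check most carefully, is bookkeeping: the outer $\max$ in the definition is over configurations $x,x'$ of all variables with $x_{-j}=x'_{-j}$, whereas the right-hand side only depends on $x_{j}$, $y_{j}$, and the values $x_{k}$ for $k\in N(i)\setminus j$ (through $s$). Variables not in $N(i)\cup\{i,j\}$ drop out of the sigmoid entirely, so maximizing over them is vacuous, and one may equivalently write the maximum over $x,y$ restricted to the relevant neighbor coordinates, which is the form stated in the lemma.
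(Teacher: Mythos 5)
Your proposal is correct and follows essentially the same route as the paper: expand Definition \ref{R-def}, convert total variation to half the $\ell_1$ distance, substitute the multivariate-sigmoid form of the single-site conditional from the preceding lemma, and pull out the common term $s=\sum_{k\in N(i)\setminus j}\theta^{ik}_{\cdot x_k}$ using the constraint $x_{-j}=y_{-j}$. The extra bookkeeping you note about which coordinates the maximum actually depends on is a reasonable clarification but not a departure from the paper's argument.
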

\begin{proof}
Using the previous Lemma inside the definition of the dependency matrix
(Definition \ref{R-def}) gives that

\begin{align*}
R_{ij} & =\max_{x,y:x_{-j}=y_{-j}}\Vert p(X_{i}=\cdot|x_{-i})-p(X_{i}=\cdot|y_{-i})\Vert_{TV}\\
 & =\max_{x,y:x_{-j}=y_{-j}}\frac{1}{2}\Vert\text{sig}(\sum_{k\in N(i)}\theta_{\cdot x_{k}}^{ik})-\text{sig}(\sum_{k\in N(i)}\theta_{\cdot y_{k}}^{ik})\Vert_{1}.
\end{align*}
Substituting the definition of $s$ inside each of the $\text{sig}()$
terms gives the result. 

While the previous Lemma bounds the dependency, it is not in a very
convenient form. Hence, the rest of this section will apply a series
of relaxations to obtain more convenient upper-bounds. The first of
these is obtained by letting $s$ be an arbitrary vector, rather than
determined by $\theta$ and $x$. \end{proof}
\begin{lem}
The dependency matrix for an MRF is bounded by 
\begin{align*}
R_{ij} & \leq\max_{x_{j},y_{j}}\max_{s}\frac{1}{2}\Vert\text{sig}(\theta_{\cdot x_{j}}^{ij}+s)-\text{sig}(\theta_{\cdot y_{j}}^{ij}+s)\Vert_{1}.
\end{align*}

\end{lem}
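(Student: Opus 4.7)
The plan is to obtain this bound directly from the preceding lemma by a one-step relaxation: replace the constrained maximization over joint configurations $(x,y)$ with an unconstrained maximization over the single vector $s$ that actually enters the sigmoid expressions.

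First, I would start from the exact expression
\[
R_{ij} = \max_{x,y:\, x_{-j}=y_{-j}} \tfrac{1}{2} \Vert \text{sig}(\theta_{\cdot x_j}^{ij}+s) - \text{sig}(\theta_{\cdot y_j}^{ij}+s)\Vert_1,
\]
and observe that because of the constraint $x_{-j}=y_{-j}$, the vector $s=\sum_{k\in N(i)\setminus j}\theta_{\cdot x_k}^{ik}$ is a function only of the shared coordinates $x_k=y_k$ for $k\in N(i)\setminus j$; in particular the same $s$ appears inside both sigmoids. Therefore the maximization can be equivalently rewritten as
\[
R_{ij} = \max_{x_j,y_j}\; \max_{\{x_k\}_{k\in N(i)\setminus j}} \tfrac{1}{2}\Vert \text{sig}(\theta_{\cdot x_j}^{ij}+s(x)) - \text{sig}(\theta_{\cdot y_j}^{ij}+s(x))\Vert_1,
\]
where $s(x) := \sum_{k\in N(i)\setminus j}\theta_{\cdot x_k}^{ik}$.

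Next, I would relax the inner maximization. The set of vectors attainable as $s(x)$ for some configuration of $\{x_k\}_{k\in N(i)\setminus j}$ is a subset of all of $\mathbb{R}^{L_i}$. Replacing the inner $\max$ over this discrete set with a supremum over all $s\in\mathbb{R}^{L_i}$ can only enlarge the maximum, giving
\[
R_{ij} \leq \max_{x_j,y_j}\; \max_{s} \tfrac{1}{2}\Vert \text{sig}(\theta_{\cdot x_j}^{ij}+s) - \text{sig}(\theta_{\cdot y_j}^{ij}+s)\Vert_1,
\]
which is exactly the claimed inequality.

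There is no real obstacle here; the lemma is essentially a bookkeeping relaxation that decouples the ``offset'' $s$ from the graph structure and remaining parameters. The value of this step is that it opens the door to the subsequent reductions in the appendix (bounding the sigmoid-difference term uniformly over $s$), which ultimately produces the clean bound in Theorem \ref{thm:R_upper_bound}.
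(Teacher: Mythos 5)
Your proof is correct and matches the paper's own justification: the paper dispenses with this lemma in a single remark, namely that the bound follows from the previous lemma ``by letting $s$ be an arbitrary vector, rather than determined by $\theta$ and $x$,'' which is exactly the relaxation you carry out (your additional observation that the constraint $x_{-j}=y_{-j}$ forces the same $s$ inside both sigmoids is the right thing to check and is implicit in the paper's statement of the preceding lemma).
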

The following Lemma will be needed in what follows.
\begin{lem}
For vectors $x,y,s$,
\[
\max_{s}\Vert\text{sig}(x+s)-\text{sig}(y+s)\Vert_{1}=2\vert2a-1\vert,
\]
where $a=\sigma\left(\frac{1}{2}\text{range}(y-x)\right)$. Here,
$\text{range}(z)$ is defined as $\max_{i}z_{i}-\min z_{i}$. 
\end{lem}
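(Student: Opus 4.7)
The plan is to reduce the maximization to a one-dimensional sigmoid problem. First I would invoke the standard identity $\|p-q\|_1 = 2\|p-q\|_{TV} = 2\sup_A |p(A)-q(A)|$ for probability vectors, so that the quantity of interest becomes $2\sup_{s,A}|\text{sig}(x+s)(A) - \text{sig}(y+s)(A)|$, where the supremum is over subsets $A$ of coordinates. For a fixed $A$, write
\[
u := \log\frac{\sum_{i\in A} e^{x_i+s_i}}{\sum_{i\notin A} e^{x_i+s_i}}, \qquad v := \log\frac{\sum_{i\in A} e^{y_i+s_i}}{\sum_{i\notin A} e^{y_i+s_i}},
\]
so that $\text{sig}(x+s)(A) = \sigma(u)$ and $\text{sig}(y+s)(A) = \sigma(v)$, with $\sigma$ the scalar sigmoid. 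This collapses the problem to bounding $|\sigma(u)-\sigma(v)|$.

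Second, I would bound $|v-u|$ in terms of $\text{range}(y-x)$. Note that
\[
v-u \;=\; \log\frac{\sum_{i\in A} e^{(y_i-x_i)}\,w_i}{\sum_{i\in A} w_i} \;-\; \log\frac{\sum_{i\notin A} e^{(y_i-x_i)}\,w_i}{\sum_{i\notin A} w_i},
\]
where $w_i = e^{x_i+s_i}$. Each term is a log of a convex combination of $e^{y_i-x_i}$, so it lies between $\min_i(y_i-x_i)$ and $\max_i(y_i-x_i)$. Therefore $|v-u| \leq \text{range}(y-x) =: \Delta$.

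Third, I would solve the scalar problem: over all $(u,v)$ with $|v-u|\leq\Delta$, the quantity $|\sigma(u)-\sigma(v)|$ is maximized at $(u,v)=(-\Delta/2,\Delta/2)$, yielding $\sigma(\Delta/2)-\sigma(-\Delta/2) = 2\sigma(\Delta/2)-1 = 2a-1$. This follows from differentiating in $u$ with $v-u$ fixed and using the symmetry $\sigma'(t)=\sigma'(-t)$ of the sigmoid derivative; monotonicity of $\sigma'$ on each side of $0$ pins down the unique optimum. Putting these together gives the upper bound $\|\text{sig}(x+s)-\text{sig}(y+s)\|_1 \leq 2(2a-1) = 2|2a-1|$ (the absolute value is vacuous since $a\geq 1/2$).

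For achievability, I would construct an explicit $s$: let $i^\star\in\arg\max_i(y_i-x_i)$ and $j^\star\in\arg\min_i(y_i-x_i)$, and send $s_k\to-\infty$ for $k\notin\{i^\star,j^\star\}$ while choosing $s_{i^\star},s_{j^\star}$ so that the two surviving log-odds equal $\pm\Delta/2$ (this is possible since the 2D case has a free overall shift). In the limit only these two coordinates carry mass, the problem reduces to the scalar setting above, and the bound is attained. The main obstacle I anticipate is the log-odds bound in step two; once that is in hand, the rest is a short calculus optimization and a limiting argument.
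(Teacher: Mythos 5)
Your proof is correct, and it is worth noting that the paper itself states this lemma \emph{without} proof in the appendix (it is asserted and then immediately applied), so there is no authorial argument to compare against --- your write-up supplies the missing justification. The three reductions are all sound: the identity $\Vert p-q\Vert_{1}=2\max_{A}\left(p(A)-q(A)\right)$ for probability vectors; the collapse of $\text{sig}(x+s)(A)$ to a scalar sigmoid of the log-odds $u$; and the bound $\vert v-u\vert\leq\text{range}(y-x)$ via writing each log-odds increment as the logarithm of a convex combination of the $e^{y_{i}-x_{i}}$, which is the one genuinely nontrivial step and is handled correctly. The scalar optimization (maximize $\sigma(u+\Delta)-\sigma(u)$, optimum at $u=-\Delta/2$ by evenness and unimodality of $\sigma'$) and the two-point limiting construction for tightness are both fine, and the observation that $a\geq1/2$ makes the absolute value vacuous is correct. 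The only caveat is cosmetic and inherited from the statement itself: when there are more than two coordinates and some $y_{i}-x_{i}$ lies strictly between the extremes, your achievability argument only attains the value in the limit $s_{k}\to-\infty$, so the ``$\max_{s}$'' in the lemma should strictly be a supremum; this does not affect the downstream use of the lemma, which only requires the upper bound.
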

Now, applying this Lemma to the previous result on the dependency
matrix gives the following Theorem.
\begin{thm}
The dependency matrix for an MRF is bounded by
\[
R_{ij}\leq\frac{1}{4}\max_{a,b}\vert\text{range}(\theta_{\cdot a}^{ij}-\theta_{\cdot b}^{ij})\vert.
\]
\end{thm}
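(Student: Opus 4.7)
The plan is to chain together the two preceding lemmas in the appendix and then use the identity that links $\sigma$ to $\tanh$ to get a linear bound. Specifically, the penultimate lemma gives
\[
R_{ij}\leq\max_{x_j,y_j}\frac{1}{2}\max_{s}\Vert\text{sig}(\theta_{\cdot x_{j}}^{ij}+s)-\text{sig}(\theta_{\cdot y_{j}}^{ij}+s)\Vert_{1},
\]
and the lemma immediately preceding the theorem evaluates the inner maximum over $s$ in closed form as $2|2a-1|$, where $a=\sigma(\tfrac{1}{2}\text{range}(\theta_{\cdot y_j}^{ij}-\theta_{\cdot x_j}^{ij}))$. So the first step is just to substitute, which reduces the desired bound to showing $|2\sigma(r/2)-1|\leq r/4$ for $r=\text{range}(\theta_{\cdot y_j}^{ij}-\theta_{\cdot x_j}^{ij})\geq 0$.

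The second step uses the algebraic identity $2\sigma(u)-1=\tanh(u/2)$, which rewrites $|2\sigma(r/2)-1|=|\tanh(r/4)|$. The standard inequality $|\tanh(t)|\leq|t|$ (which follows from the fact that $\tanh$ has derivative $\leq 1$ and is odd) then gives $|\tanh(r/4)|\leq r/4$, completing the bound. Combining the two steps,
\[
R_{ij}\leq\max_{x_j,y_j}|2a-1|\leq \max_{x_j,y_j}\tfrac{1}{4}\,\text{range}(\theta_{\cdot y_j}^{ij}-\theta_{\cdot x_j}^{ij}) = \tfrac{1}{4}\max_{a,b}|\text{range}(\theta_{\cdot a}^{ij}-\theta_{\cdot b}^{ij})|,
\]
where the last equality uses that $\text{range}(v)=\text{range}(-v)\geq 0$ and relabels $x_j,y_j$ as $b,a$.

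There is essentially no hard step here, since the work has already been done in the two preceding lemmas; the only things to check carefully are the factor of $1/2$ inside $\sigma$ (so that $2\sigma(r/2)-1=\tanh(r/4)$ gives the desired factor $1/4$ rather than $1/2$) and that the linearization $|\tanh(t)|\leq|t|$ is tight enough to be useful. The previous lemma whose proof is not shown, evaluating $\max_s\|\text{sig}(x+s)-\text{sig}(y+s)\|_1$, is the real content; given it, the theorem is a two-line corollary. If one wanted to give a self-contained proof of that lemma, the main obstacle would be verifying the sigmoid identity by a direct optimization over $s$, using that the difference $\text{sig}(x+s)-\text{sig}(y+s)$ has coordinates summing to zero and reducing to a two-dimensional problem in terms of $\max(y-x)$ and $\min(y-x)$.
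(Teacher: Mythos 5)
Your proof is correct and takes essentially the same route as the paper: chain the two preceding lemmas and then bound $\vert2\sigma(\tfrac{1}{2}x)-1\vert$ by $\tfrac{1}{4}\vert x\vert$. The paper merely asserts that last inequality as an ``easily-proven fact,'' whereas you supply the $\tanh$ identity that establishes it; otherwise the arguments are identical.
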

\begin{proof}
The previous result gives us the bound
\[
R_{ij}\leq\max_{a,b}\vert2\sigma(\frac{1}{2}\text{range}(\theta_{\cdot a}^{ij}-\theta_{\cdot b}^{ij})-1\vert.
\]
 Using the easily-proven fact that $\vert2\sigma(\frac{1}{2}x)-1\vert\leq\frac{1}{4}\vert x\vert$
gives the result. \end{proof}
\begin{cor}
The dependency matrix for an MRF is bounded by
\[
R_{ij}\leq\max_{a,b}\frac{1}{4}\Vert\theta_{\cdot a}^{ij}-\theta_{\cdot b}^{ij}\Vert_{1},\quad R_{ij}\leq\max_{a,b}\frac{1}{2}\Vert\theta_{\cdot a}^{ij}-\theta_{\cdot b}^{ij}\Vert_{\infty}.
\]
\end{cor}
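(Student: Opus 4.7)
The plan is to derive both bounds as direct consequences of the preceding Theorem, which already tells us
\[
R_{ij}\leq\frac{1}{4}\max_{a,b}\bigl\vert\text{range}(\theta_{\cdot a}^{ij}-\theta_{\cdot b}^{ij})\bigr\vert.
\]
So all that remains is to bound $\text{range}(v)$ for an arbitrary vector $v$ in terms of $\|v\|_1$ and $\|v\|_\infty$, and then substitute $v=\theta_{\cdot a}^{ij}-\theta_{\cdot b}^{ij}$.

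For the $\ell_\infty$ bound, I would observe that $\max_i v_i\leq \max_i|v_i|=\|v\|_\infty$ and $-\min_i v_i\leq \max_i|v_i|=\|v\|_\infty$, so $\text{range}(v)=\max_i v_i - \min_i v_i \leq 2\|v\|_\infty$. Plugging this into the Theorem yields the factor $\tfrac{1}{4}\cdot 2=\tfrac{1}{2}$ in front of the $\ell_\infty$ norm.

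For the $\ell_1$ bound, I would split the entries of $v$ into positive and negative parts: let $P=\{i:v_i>0\}$ and $N=\{i:v_i<0\}$. Then $\max_i v_i \leq \sum_{i\in P} v_i$ and $-\min_i v_i \leq \sum_{i\in N}|v_i|$, so
\[
\text{range}(v) \leq \sum_{i\in P}|v_i|+\sum_{i\in N}|v_i| \leq \sum_i |v_i| = \|v\|_1.
\]
Substituting into the Theorem gives the $\tfrac{1}{4}\|v\|_1$ bound.

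There is no real obstacle here; the entire content of the corollary is the pair of elementary inequalities $\text{range}(v)\leq 2\|v\|_\infty$ and $\text{range}(v)\leq \|v\|_1$. The only thing worth flagging is that the $\ell_1$ bound is uniformly tighter than the $\ell_\infty$ bound whenever the vector is sparse in a strong sense, while the $\ell_\infty$ bound is tighter when the nonzero entries are spread out, so one keeps both forms because different norms are more convenient in different downstream projection algorithms (e.g.\ the induced $\infty$-norm projection discussed in Section \ref{sec:Euclidean-Projection}).
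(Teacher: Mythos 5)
Your proof is correct and follows essentially the same route as the paper: the corollary is deduced from the preceding theorem via the two elementary inequalities $\text{range}(v)\leq\Vert v\Vert_{1}$ and $\text{range}(v)\leq2\Vert v\Vert_{\infty}$, which the paper simply states as observations and which you verify in detail. Your explicit positive/negative-part argument for the $\ell_{1}$ case and the two-sided bound for the $\ell_{\infty}$ case are both valid, so there is nothing to correct.
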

\begin{proof}
This follows immediately from the observations that $\vert\text{range}(x)\vert\leq\Vert x\Vert_{1}$
and that $\vert\text{range}(x)\vert\leq2\Vert x\Vert_{\infty}$.
\end{proof}

\subsection{Proof of Dual Representation for Euclidean Projection Operator}

This section gives a proof of the main result of Section \ref{sub:Dual-Representation},
as stated below.
\begin{thm}
The projection operator

\begin{equation}
\text{proj}_{\mathcal{C}}(\psi,Y):=\underset{(\theta,Z)\in\mathcal{C}}{\text{argmin }}\Vert\theta-\psi\Vert^{2}+\alpha\Vert Z-Y\Vert_{F}^{2},\,\,\,\,\mathcal{C}=\{(\theta,Z):Z_{ij}\geq R_{ij}(\theta),\Vert Z\Vert_{*}\leq c\}\label{eq:projection-appendix}
\end{equation}
has the dual representation of
\begin{equation}
\begin{aligned} & \underset{\sigma,\phi,\Delta,\Gamma}{\text{maximize}} &  & g(\sigma,\phi,\Delta,\Gamma)\\
 & \text{subject to} &  & \sigma_{ij}(a,b,c)\geq0,\phi_{ij}(a,b,c)\geq0, &  & \forall(i,j)\in\mathcal{E},a,b,c
\end{aligned}
,\label{eq:the-dual-appendix}
\end{equation}
where 
\begin{align*}
g(\sigma,\phi,\Delta,\Gamma) & =\min_{Z}h_{1}(Z;\sigma,\phi,\Delta,\Gamma)+\min_{\theta}h_{2}(\theta;\sigma,\phi)\\
h_{1}(Z;\sigma,\phi,\Delta,\Gamma) & =-\text{tr}(Z\Lambda^{T})+\text{I}(\Vert Z\Vert_{*}\leq c)+\alpha\Vert Z-Y\Vert_{F}^{2}\\
h_{2}(\theta;\sigma,\phi) & =\Vert\theta-\psi\Vert^{2}+\frac{1}{2}\sum_{i,j\in\mathcal{E}}\sum_{a,b,c}\big(\sigma_{ij}(a,b,c)-\phi_{ij}(a,b,c)\big)(\theta_{c,a}^{ij}-\theta_{c,b}^{ij}),
\end{align*}
in which $\Lambda_{ij}:=\Delta_{ij}D_{ij}+\hat{\Gamma}_{ij}+\sum_{a,b,c}\sigma_{ij}(a,b,c)+\phi_{ij}(a,b,c)$,
where $\hat{\Gamma}_{ij}:=\left\{ \begin{array}{ll}
\Gamma_{ij} & \text{if \ensuremath{(i,j)\in\mathcal{E}}}\\
-\Gamma_{ij} & \text{if \ensuremath{(j,i)\in\mathcal{E}}}
\end{array}\right.$, and $D$ is an indicator matrix with $D_{ij}=0$ if $(i,j)\in\mathcal{E}\text{ or }(j,i)\in\mathcal{E}$,
and $D_{ij}=1$ otherwise. The dual variables $\sigma_{ij}$ and $\phi_{ij}$
are arrays of size $L_{j}\times L_{i}\times L_{i}$ for all pairs
$(i,j)\in\mathcal{E}$ while $\Delta$ and $ $\textup{$\Gamma$ are
of size $n\times n$.}\end{thm}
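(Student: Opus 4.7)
The plan is to derive the dual by standard Lagrangian duality, with the only nontrivial work being the careful bookkeeping of multipliers. I would first rewrite the primal (\ref{eq:projection-appendix}) as a quadratic program by expanding the definition of $R_{ij}(\theta)$ from Theorem \ref{thm:R_upper_bound}. Since $R_{ij}(\theta)=\max_{a,b,c}\tfrac12|\theta_{c,a}^{ij}-\theta_{c,b}^{ij}|$ for $(i,j)\in\mathcal{E}$, the single inequality $Z_{ij}\ge R_{ij}(\theta)$ is equivalent to the two families of linear inequalities $Z_{ij}\ge \tfrac12(\theta_{c,a}^{ij}-\theta_{c,b}^{ij})$ and $Z_{ij}\ge -\tfrac12(\theta_{c,a}^{ij}-\theta_{c,b}^{ij})$ indexed by $a,b,c$; I would attach the nonnegative multipliers $\sigma_{ij}(a,b,c)$ and $\phi_{ij}(a,b,c)$ to these. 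For pairs $(i,j)$ with neither orientation in $\mathcal{E}$, $R_{ij}(\theta)=0$ identically, so I would encode this as $Z_{ij}=0$ with free multiplier $\Delta_{ij}$ (this is where the indicator matrix $D$ enters). Finally, to make $Z$ usable as a matrix in an arbitrary sub-multiplicative norm, I would impose the symmetry constraint $Z_{ij}=Z_{ji}$ for pairs with one orientation in $\mathcal{E}$, with free multiplier $\Gamma_{ij}$; the sign convention in $\hat\Gamma$ will come out automatically from which index is "plus" in the Lagrangian term $\Gamma_{ij}(Z_{ij}-Z_{ji})$.

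Next, I would form the Lagrangian while keeping the indicator $I(\|Z\|_*\le c)$ and the two quadratic terms $\|\theta-\psi\|^2$ and $\alpha\|Z-Y\|_F^2$ undualized. Grouping the Lagrangian by $\theta$-dependent and $Z$-dependent pieces separates it cleanly into $h_2(\theta;\sigma,\phi)+h_1(Z;\sigma,\phi,\Delta,\Gamma)$. The $\theta$-dependent collection is exactly the quadratic $\|\theta-\psi\|^2$ plus $\tfrac12\sum(\sigma_{ij}(a,b,c)-\phi_{ij}(a,b,c))(\theta_{c,a}^{ij}-\theta_{c,b}^{ij})$, matching $h_2$. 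The $Z$-dependent collection is $\alpha\|Z-Y\|_F^2+I(\|Z\|_*\le c)$ together with all the linear-in-$Z$ contributions, which I would collect entrywise into $-\mathrm{tr}(Z\Lambda^T)$; checking entry $(i,j)$ term by term shows the coefficient is the sum of $\Delta_{ij}D_{ij}$ (present only on non-edges), the signed symmetry contribution $\hat\Gamma_{ij}$, and $\sum_{a,b,c}(\sigma_{ij}(a,b,c)+\phi_{ij}(a,b,c))$, which is exactly the definition of $\Lambda_{ij}$ in the theorem statement.

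The dual function is then obtained by minimizing the Lagrangian in $\theta$ and $Z$ separately, which produces $g(\sigma,\phi,\Delta,\Gamma)=\min_\theta h_2+\min_Z h_1$, and the only remaining dual constraints are the nonnegativity of $\sigma$ and $\phi$ inherited from the inequality constraints, while $\Delta$ and $\Gamma$ are free because they multiply equality constraints. To justify taking the dual as the optimum value and closing the argument, I would invoke strong duality: the primal is convex (quadratic objective, linear inequality/equality constraints plus a convex norm-ball constraint), and Slater's condition is trivially met since $\theta=0,Z=0$ lies in the relative interior of $\mathcal{C}$ for any $c>0$.

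The main obstacle is purely bookkeeping: verifying that the signs and index ranges in $\Lambda_{ij}$ are exactly right. In particular, the $\hat\Gamma$ sign convention only appears correct once one commits to indexing $\Gamma$ by the edge orientation in $\mathcal{E}$ (with $i\le j$), so that $Z_{ij}$ on the edge receives $+\Gamma_{ij}$ while the reflected entry $Z_{ji}$ receives $-\Gamma_{ij}$. The rest of the argument is routine Lagrangian manipulation, and no probabilistic content from Section \ref{sec:Dependency-for-MRF} beyond the explicit formula for $R_{ij}(\theta)$ is needed.
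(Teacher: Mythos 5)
Your proposal is correct and follows essentially the same route as the paper's proof: the same reformulation of $Z_{ij}\ge R_{ij}(\theta)$ into the two signed families of linear inequalities with multipliers $\sigma,\phi$, the same equality constraints $D_{ij}Z_{ij}=0$ and $Z_{ij}=Z_{ji}$ with multipliers $\Delta,\Gamma$, the same partially-dualized Lagrangian keeping $\text{I}(\Vert Z\Vert_*\le c)$ and the quadratics, and the same appeal to convexity and Slater's condition for strong duality. The separation into $h_1$ and $h_2$ and the entrywise identification of $\Lambda$ match the paper's derivation exactly.
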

\begin{proof}
Firstly, we observe that the minimization in Eq. \ref{eq:projection-appendix}
is equivalent to
\begin{equation}
\begin{aligned} & \underset{\theta,Z}{\text{minimize}} &  & \Vert\theta-\psi\Vert^{2}+\alpha\Vert Z-Y\Vert_{F}^{2}\\
 & \text{subject to} &  & \Vert Z\Vert_{*}\leq c\\
 &  &  & Z_{ij}=Z_{ji},\quad\forall(i,j)\in{\mathcal{E}}\\
 &  &  & Z_{ij}\geq\max_{1\leq a,b\leq m}\frac{1}{2}\Vert\theta_{.a}^{ij}-\theta_{.b}^{ij}\Vert_{\infty},\forall(i,j)\in{\mathcal{E}}\\
 &  &  & D_{ij}Z_{ij}=0,\quad1\leq i,j\leq n.
\end{aligned}
\label{eq_proj-appendix}
\end{equation}

\end{proof}
Now, consider the Lagrangian of this problem, 
\begin{align*}
L & (\theta,Z,\sigma,\phi,\Delta,\Gamma):=\Vert\theta-\psi\Vert^{2}+\alpha\Vert Z-Y\Vert_{F}^{2}+\text{I}(\Vert Z\Vert_{*}\leq c)\\
 & -\sum_{(i,j)\in\mathcal{E}}\sum_{a,b,c}\sigma_{ij}(a,b,c)\big(Z_{ij}-\frac{1}{2}(\theta_{c,a}^{ij}-\theta_{c,b}^{ij})\big)-\sum_{(i,j)\in\mathcal{E}}\sum_{a,b,c}\phi_{ij}(a,b,c)\big(Z_{ij}+\frac{1}{2}(\theta_{c,a}^{ij}-\theta_{c,b}^{ij})\big)\\
 & -\sum_{i,j}\Delta_{ij}D_{ij}Z_{ij}-\sum_{(i,j)\in\mathcal{E}}\Gamma_{ij}(Z_{ij}-Z_{ji}).
\end{align*}
Here, $\Gamma$, $\Delta$, $\sigma_{ij}$ and $\phi_{ij},1\leq i,j\leq n$
are dual variables and $\sum_{i,j}$ denotes $\sum_{1\leq i,j\leq n}$
for simplicity of notation. Here, note that $L$ is independent of
$\Gamma_{ij},\sigma_{ij}$ and $\phi_{ij}$ for $(i,j)\not\in\mathcal{E}$.
For convenience, one can simply set these to zero.

It is straightforward to verify that the problem in Eq. \ref{eq_proj-appendix}
is convex and Slater's conditions hold. Thus, by strong duality we
have the the solution of Eq. \ref{eq_proj-appendix} is equal to 
\[
\min_{\theta,Z}\max_{\sigma\geq0,\phi\geq0,\Delta,\Gamma}L(\theta,Z,\sigma,\phi,\Delta,\Gamma)=\max_{\sigma\geq0,\phi\geq0,\Delta,\Gamma}g(\sigma,\phi,\Delta,\Gamma),
\]
where we define the dual function 
\[
g(\sigma,\phi,\Delta,\Gamma)=\min_{\theta,Z}L(\theta,Z,\sigma,\phi,\Delta,\Gamma).
\]

Finally, by a simple manipulation of terms, we can see that\textcolor{green}{{}
}
\begin{align*}
g(\sigma,\phi,\Delta,\Gamma) & =\min_{Z}h_{1}(Z;\sigma,\phi,\Delta,\Gamma)+\min_{\theta}h_{2}(\theta;\sigma,\phi)\\
h_{1}(Z;\sigma,\phi,\Delta,\Gamma) & =-\text{tr}(Z\Lambda^{T})+\text{I}(||Z||_{*}\leq c)+\alpha||Z-Y||_{F}^{2}\\
h_{2}(\theta;\sigma,\phi) & =\Vert\theta-\psi\Vert^{2}+\frac{1}{2}\sum_{i,j\in\mathcal{E}}\sum_{a,b,c}\big(\sigma_{ij}(a,b,c)-\phi_{ij}(a,b,c)\big)(\theta_{c,a}^{ij}-\theta_{c,b}^{ij}).
\end{align*}

\subsection{Additional Experimental Results}

The rest of the appendix contains extra experimental results that
could not fit in the main paper.

\begin{figure}[bh]
\begin{centering}
\textbf{\includegraphics[width=0.4\textwidth]{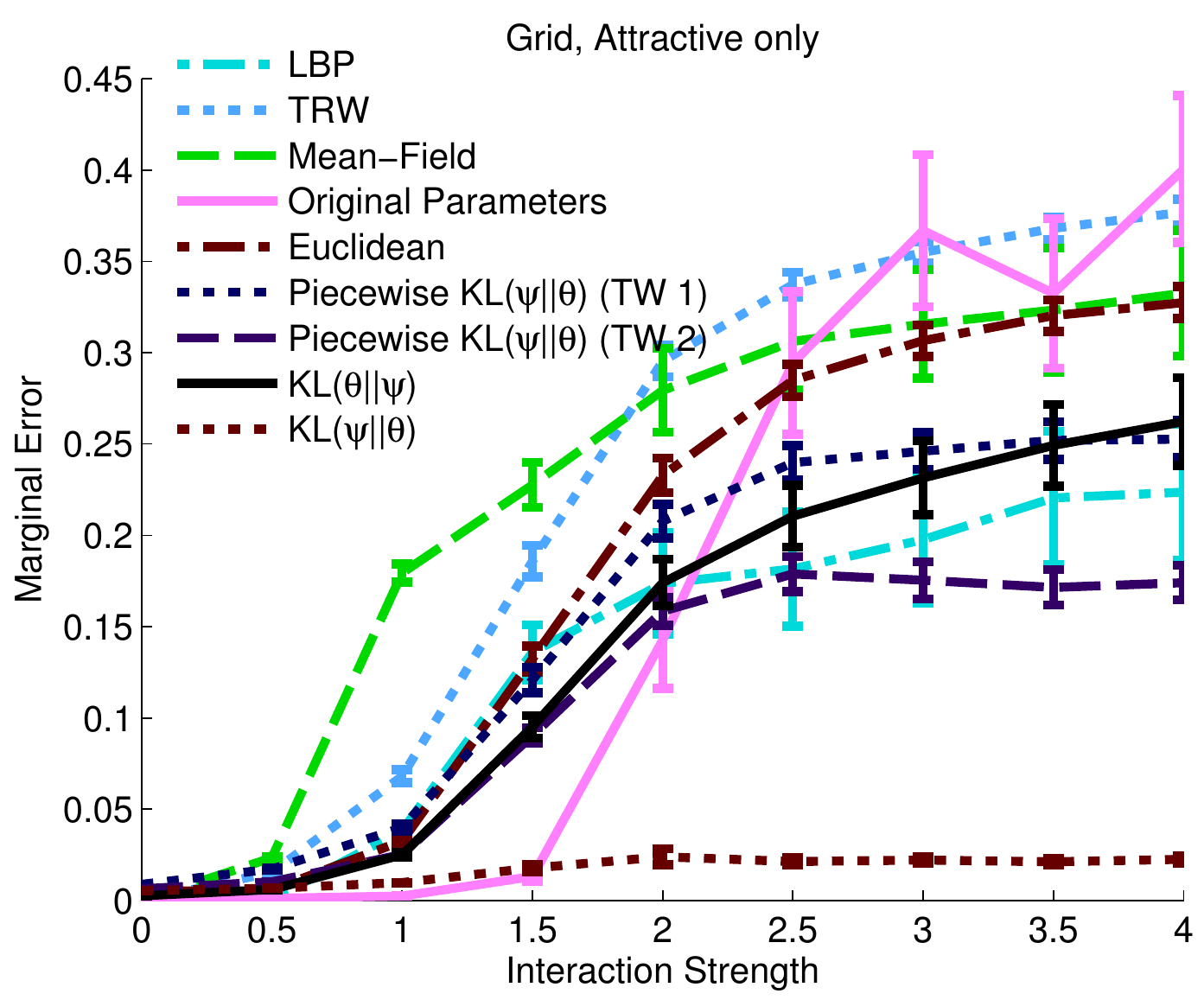}\includegraphics[width=0.4\textwidth]{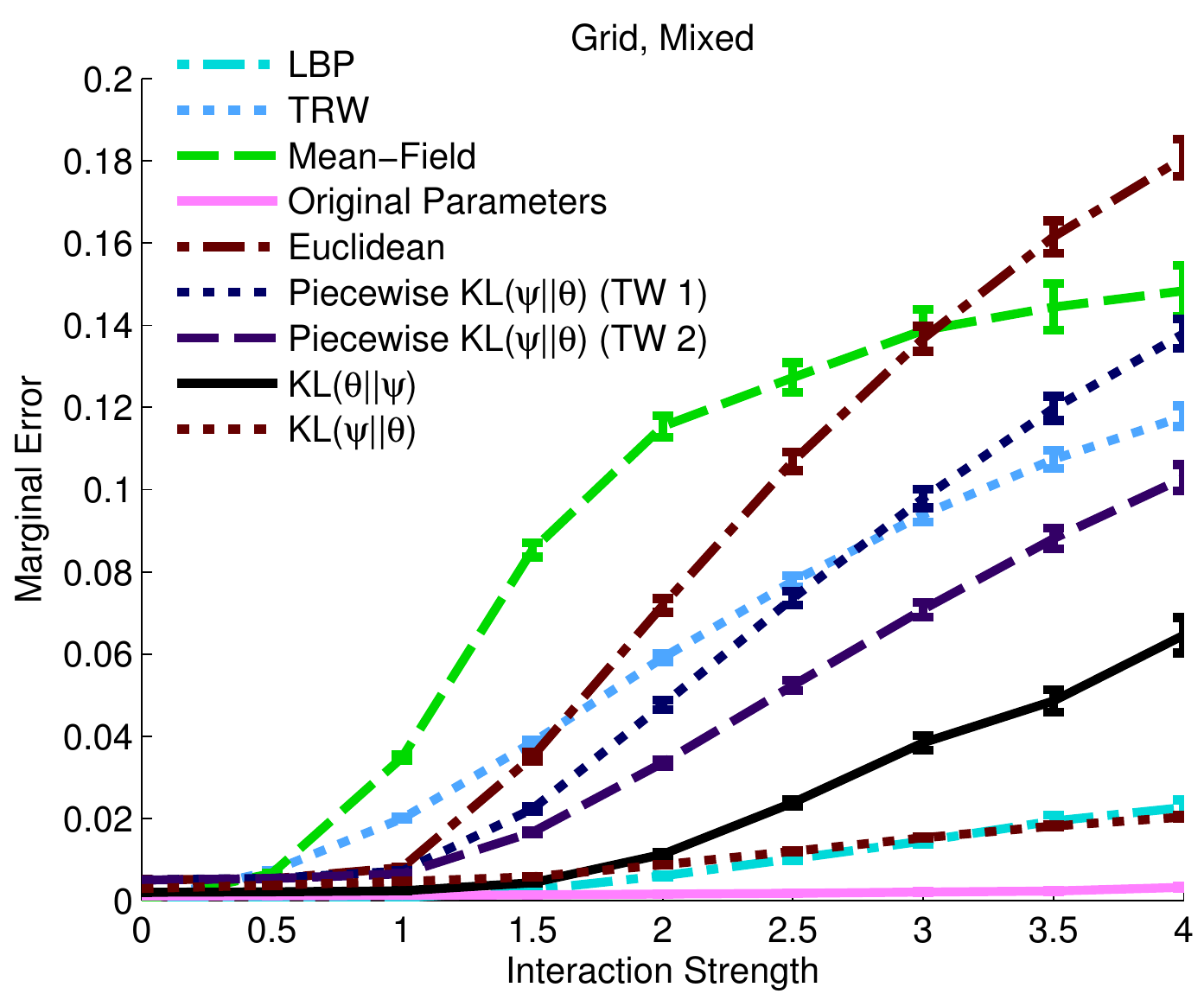}}
\par\end{centering}

\centering{}\protect\caption{Marginal error vs. interaction strength for 3-state Potts models on
grids. Here, the intractable divergence $KL(\psi\Vert\theta)$ is
included for reference. With attractive interactions, the best-performing
tractable algorithm uses the piecewise divergence, while with mixed
interactions, loopy BP and simply sampling using the original parameters
both perform extremely well.}
\end{figure}

\newpage{}

\begin{figure}
\begin{centering}
\textbf{\includegraphics[width=0.4\textwidth]{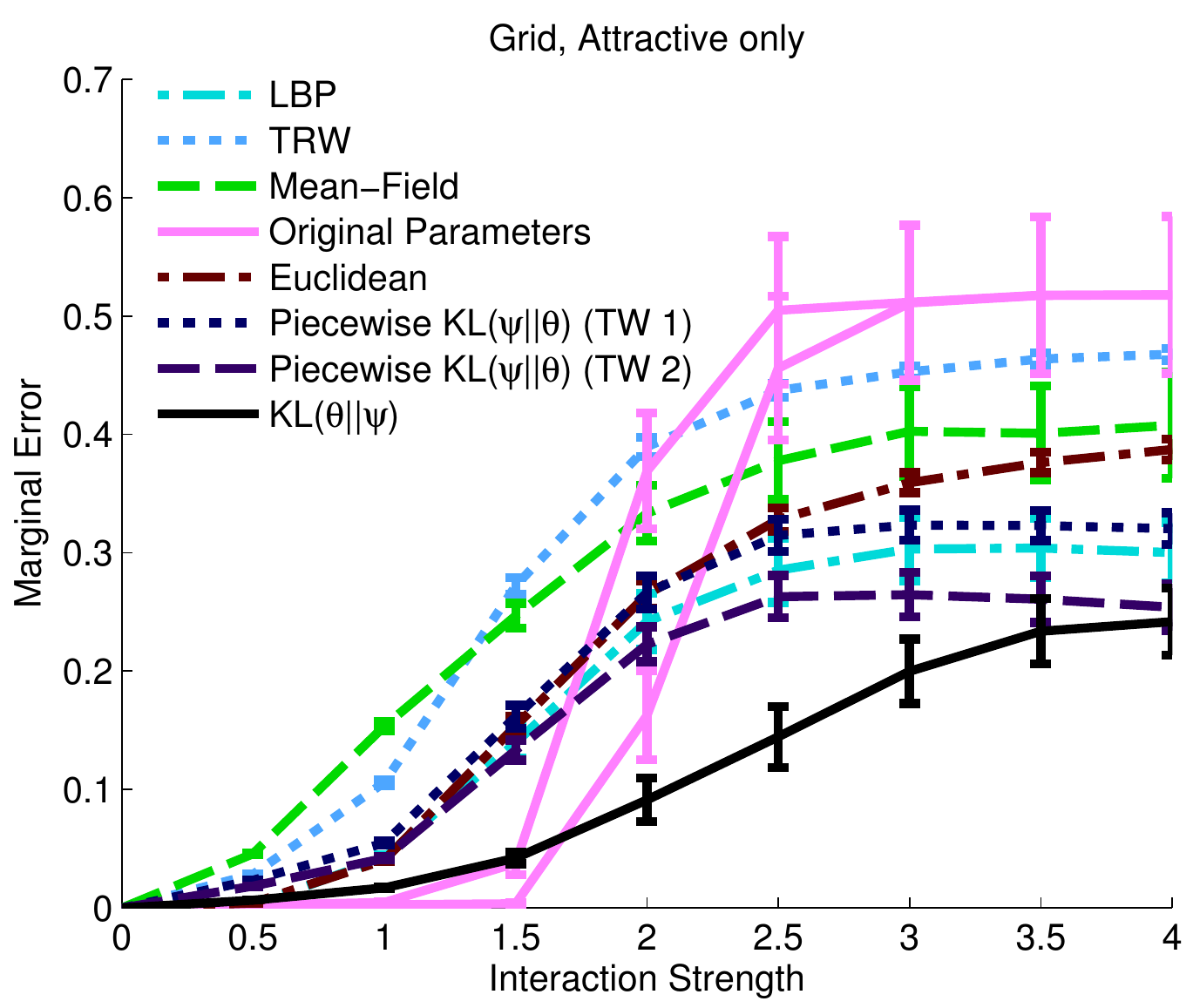}\includegraphics[width=0.4\textwidth]{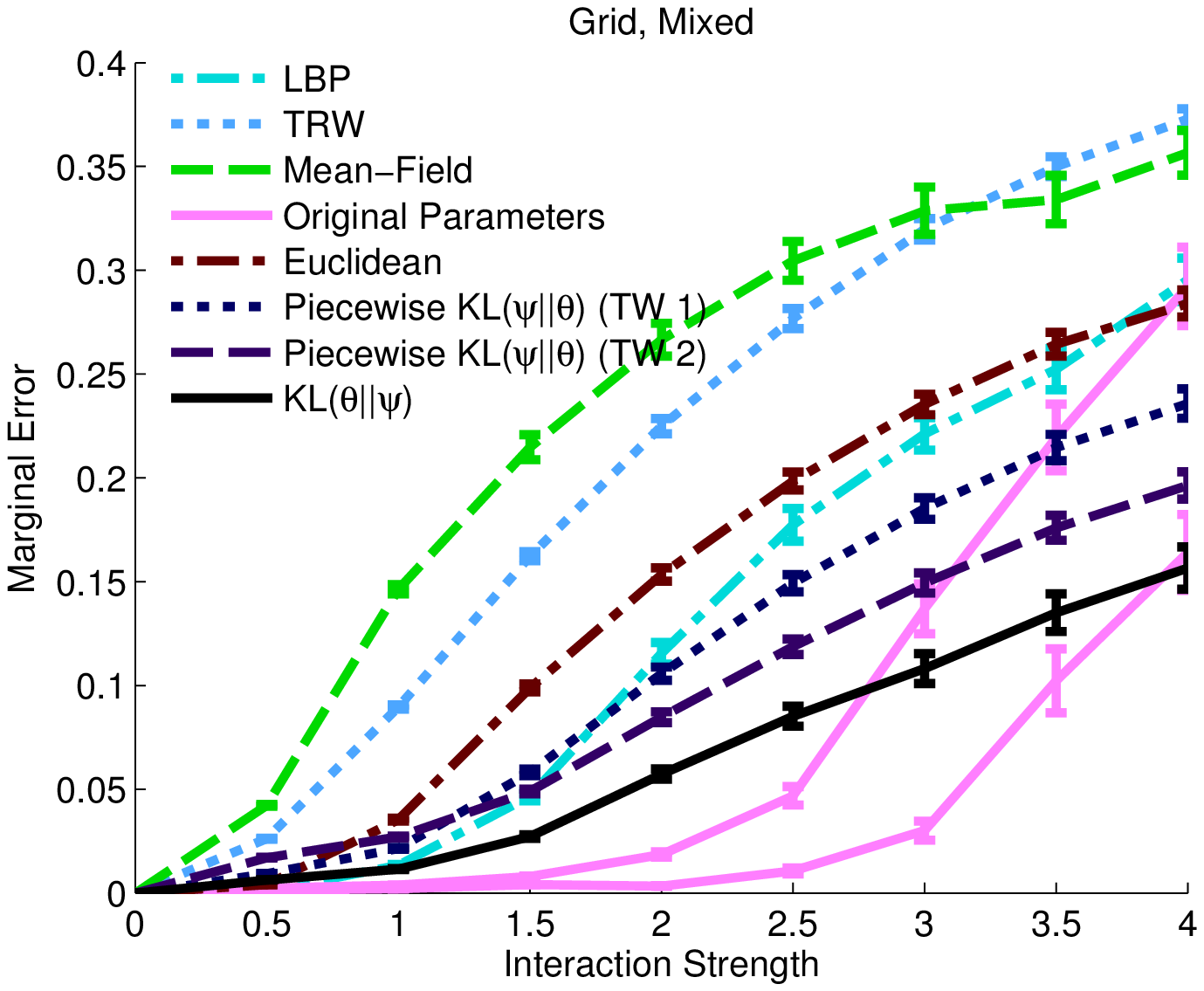}}
\par\end{centering}

\centering{}\protect\caption{Marginal error vs. interaction strength for Ising models on grids}
\end{figure}

\newpage{}

\begin{figure}
\centering \includegraphics[scale=0.4]{figures/strength_0\lyxdot 30}
\includegraphics[scale=0.4]{figures/strength_a_0\lyxdot 30}

\includegraphics[scale=0.4]{figures/strength_0\lyxdot 50} \includegraphics[scale=0.4]{figures/strength_a_0\lyxdot 50}

\includegraphics[scale=0.4]{figures/strength_0\lyxdot 70} \includegraphics[scale=0.4]{figures/strength_a_0\lyxdot 70}

\protect\caption{Marginal error v.s. interaction strength for Ising models on random
graphs}
\end{figure}

\newpage{}
\begin{figure}
\centering \includegraphics[scale=0.4]{figures/time_1\lyxdot 50}
\includegraphics[scale=0.4]{figures/time_a_1\lyxdot 50}

\includegraphics[scale=0.4]{figures/time_2\lyxdot 00} \includegraphics[scale=0.4]{figures/time_a_2\lyxdot 00}

\includegraphics[scale=0.4]{figures/time_2\lyxdot 50} \includegraphics[scale=0.4]{figures/time_a_2\lyxdot 50}

\includegraphics[scale=0.4]{figures/time_3\lyxdot 00} \includegraphics[scale=0.4]{figures/time_a_3\lyxdot 00}

\includegraphics[scale=0.4]{figures/time_4\lyxdot 00} \includegraphics[scale=0.4]{figures/time_a_4\lyxdot 00}

\protect\caption{Marginal error v.s. number of samples for Ising models on grids}
\end{figure}

\newpage{}
\begin{figure}
\centering \includegraphics[scale=0.4]{figures/rnd_time_0\lyxdot 30_1\lyxdot 50}
\includegraphics[scale=0.4]{figures/rnd_time_a_0\lyxdot 30_1\lyxdot 50}

\includegraphics[scale=0.4]{figures/rnd_time_0\lyxdot 30_2\lyxdot 00}
\includegraphics[scale=0.4]{figures/rnd_time_a_0\lyxdot 30_2\lyxdot 00}

\includegraphics[scale=0.4]{figures/rnd_time_0\lyxdot 30_2\lyxdot 50}
\includegraphics[scale=0.4]{figures/rnd_time_a_0\lyxdot 30_2\lyxdot 50}

\includegraphics[scale=0.4]{figures/rnd_time_0\lyxdot 30_3\lyxdot 00}
\includegraphics[scale=0.4]{figures/rnd_time_a_0\lyxdot 30_3\lyxdot 00}

\includegraphics[scale=0.4]{figures/rnd_time_0\lyxdot 30_4\lyxdot 00}
\includegraphics[scale=0.4]{figures/rnd_time_a_0\lyxdot 30_4\lyxdot 00}
\protect\caption{Marginal error v.s. number of samples for Ising models on random graphs
with edge density 0.3}
\end{figure}

\newpage{}
\begin{figure}
\centering \includegraphics[scale=0.4]{figures/rnd_time_0\lyxdot 50_1\lyxdot 50}
\includegraphics[scale=0.4]{figures/rnd_time_a_0\lyxdot 50_1\lyxdot 50}

\includegraphics[scale=0.4]{figures/rnd_time_0\lyxdot 50_2\lyxdot 00}
\includegraphics[scale=0.4]{figures/rnd_time_a_0\lyxdot 50_2\lyxdot 00}

\includegraphics[scale=0.4]{figures/rnd_time_0\lyxdot 50_2\lyxdot 50}
\includegraphics[scale=0.4]{figures/rnd_time_a_0\lyxdot 50_2\lyxdot 50}

\includegraphics[scale=0.4]{figures/rnd_time_0\lyxdot 50_3\lyxdot 00}
\includegraphics[scale=0.4]{figures/rnd_time_a_0\lyxdot 50_3\lyxdot 00}

\includegraphics[scale=0.4]{figures/rnd_time_0\lyxdot 50_4\lyxdot 00}
\includegraphics[scale=0.4]{figures/rnd_time_a_0\lyxdot 50_4\lyxdot 00}

\protect\caption{Marginal error v.s. number of samples for Ising models on random graphs
with edge density 0.5}
\end{figure}

\newpage{}
\begin{figure}
\centering \includegraphics[scale=0.4]{figures/rnd_time_0\lyxdot 70_1\lyxdot 50}
\includegraphics[scale=0.4]{figures/rnd_time_a_0\lyxdot 70_1\lyxdot 50}

\includegraphics[scale=0.4]{figures/rnd_time_0\lyxdot 70_2\lyxdot 00}
\includegraphics[scale=0.4]{figures/rnd_time_a_0\lyxdot 70_2\lyxdot 00}

\includegraphics[scale=0.4]{figures/rnd_time_0\lyxdot 70_2\lyxdot 50}
\includegraphics[scale=0.4]{figures/rnd_time_a_0\lyxdot 70_2\lyxdot 50}

\includegraphics[scale=0.4]{figures/rnd_time_0\lyxdot 70_3\lyxdot 00}
\includegraphics[scale=0.4]{figures/rnd_time_a_0\lyxdot 70_3\lyxdot 00}

\includegraphics[scale=0.4]{figures/rnd_time_0\lyxdot 70_4\lyxdot 00}
\includegraphics[scale=0.4]{figures/rnd_time_a_0\lyxdot 70_4\lyxdot 00}

\protect\caption{Marginal error v.s. number of samples for Ising models on random graphs
with edge density 0.7}
\end{figure}

\end{document}